\documentclass[10pt,twocolumn,letterpaper]{article}
\usepackage[moderate]{savetrees}




\usepackage{amsthm}
\usepackage{amsmath}
\usepackage{amssymb}
\usepackage{bm}
\usepackage{mathrsfs}
\usepackage[dvips]{graphicx}

\usepackage{algorithm}
\usepackage{algorithmic}

\usepackage{color}

\usepackage{url}

\usepackage{supertabular}





\newtheorem{thm}{Theorem}

\theoremstyle{definition}

\theoremstyle{remark}

\newtheorem{fact}[thm]{Fact}




\numberwithin{thm}{section}



\DeclareMathAlphabet{\mathsfsl}{OT1}{cmss}{m}{sl}




\renewcommand{\phi}{\varphi}





\newcommand{\R}{\mathbb{R}}

















\newcommand{\bX}{\boldsymbol{X}}

\newcommand{\bP}{\boldsymbol{P}}

\newcommand{\bD}{\boldsymbol{D}}

\newcommand{\bV}{\boldsymbol{V}}

\def\E{\mathbb{E}}

\def\bS{\boldsymbol{S}}
\def\b0{\mathbf{0}}

\def\bP{\boldsymbol{P}}

\def\bY{\boldsymbol{Y}}

\def\nnz{\mathsf{nnz}}

\usepackage{3dv}
\usepackage{times}
\usepackage{epsfig}
\usepackage{graphicx}
\usepackage{amsmath}
\usepackage{amssymb}
\usepackage{amsthm}
\usepackage{kantlipsum}
\usepackage{multicol}
\usepackage{multirow}

\usepackage{comment}
\usepackage[linesnumbered,ruled,vlined,algo2e]{algorithm2e}

\usepackage[pagebackref=true,breaklinks=true,colorlinks,bookmarks=false]{hyperref}

\threedvfinalcopy 


\SetKwInput{KwInput}{Input}                
\SetKwInput{KwOutput}{Output}   

\ifthreedvfinal\pagestyle{empty}\fi

\newtheorem{lemma}{Lemma}

\newtheorem{proposition}{Proposition}

\raggedbottom

\usepackage[dvipsnames]{xcolor}

\begin{document}

\title{Scalable Cluster-Consistency Statistics for Robust Multi-Object Matching}

\makeatletter
\newcommand{\printfnsymbol}[1]{%
  \textsuperscript{\@fnsymbol{#1}}%
}

\author{
  Yunpeng Shi\printfnsymbol{1}\hspace{2cm} Shaohan Li\printfnsymbol{2}\hspace{2cm} Tyler Maunu\printfnsymbol{3}\hspace{2cm} Gilad Lerman\printfnsymbol{2}\\
  \printfnsymbol{1}Program in Applied and Computational Mathematics, Princeton University\\
  \printfnsymbol{2}School of Mathematics, University of Minnesota\\
  \printfnsymbol{3}Department of Mathematics, Brandeis University\\
  }

\maketitle
\thispagestyle{empty}


\begin{abstract}

We develop new statistics for robustly filtering corrupted keypoint matches in the structure from motion pipeline. The statistics are based on consistency constraints that arise within the clustered structure of the graph of keypoint matches. The statistics are designed to give smaller values to corrupted matches and than uncorrupted matches. These new statistics are combined with an iterative reweighting scheme to filter keypoints, which can then be fed into any standard structure from motion pipeline. This filtering method can be efficiently implemented and scaled to massive datasets as it only requires sparse matrix multiplication. We demonstrate the efficacy of this method on synthetic and real structure from motion datasets and show that it achieves state-of-the-art accuracy and speed in these tasks. Our code is released at \url{https://github.com/yunpeng-shi/FCC}.

\end{abstract}
\section{Introduction}
\label{sec:intro}


The problem of matching multiple objects, namely, multi-object matching, arises in different computer vision applications, such as 3D shape matching \cite{huang2012optimization}
and structure from motion (SfM) \cite{sfmsurvey_2017}. In SfM  and other 3D reconstruction problems, each object is an image that contains a set of keypoints that are generated and matched using various automatic procedures. In practice, due to differing viewpoints, every pair of images only contains a partial set of shared keypoints among all possible keypoints, and the estimated matches are typically corrupted.
Common sources of corruption of the estimated keypoint matches are scene occlusion, change of illumination, viewing distance and perspective, repetitive patterns and ambiguous symmetry \cite{1dsfm14}.

Formally, the general problem of multi-object matching in SfM aims to assign absolute matchings between keypoints and their corresponding 3D scene points. However, in SfM and 3D reconstruction problems, one only needs to estimate the fundamental matrices of each image given the corrupted partial matches \cite{sfmsurvey_2017}. Therefore, for such applications, it is sufficient to estimate  only a subset of the ground-truth keypoint  matches without estimating all of the absolute matches, since it only takes 7 matches to specify the fundamental matrix~\cite{multiviewbook}. 

The common matching procedure in SfM directly compares SIFT \cite{sift04} feature descriptors and refines them by  bidirectional matching \cite{multiviewbook}. Typically, some classical methods are used to make the SfM pipeline robust to keypoint mismatches. For example, it is common to use either the least median of squares or RANSAC for fundamental matrix estimation. Bundle adjustment, the final step in the SfM pipeline, can also be used to correct keypoint matches. 

Different methods have been proposed for solving the multi-object matching in its broad sense, but they have been either unscalable or practically ineffective for SfM. 
Nevertheless, it is interesting to note mathematical ideas that were used before. 
One basic mathematical framework that applies to the matching of only two objects is that of the quadratic assignment problem \cite{loiola2007survey}. Another mathematical framework for multi-object matching with complete matching (and thus does not apply to partial matching) is Permutation Synchronization (PS) \cite{deepti}. These two formulations are not directly applicable to SfM and are hard to compute~\cite{sahni1976pcomplete,Z2NP,deepti}. 
A more natural formulation for the matching problem in SfM is Partial Permutation Synchronization (PPS) \cite{chen_partial}. It aims to address the setting of partial matching,
where some keypoints in one image may not match those in another image. 
Unlike permutation synchronization, PPS is not a special case of the general problem of group synchronization \cite{cemp,AMP_compact}, and thus there are no clean and universal approaches to address it. Its existing solutions \cite{chen_partial, MatchALS} are computationally demanding, so they may not be effectively used within the SfM pipeline. 

Some recent attempts have tried to leverage the consistency structure of the graph of keypoints (which is different from the common graph used for the PPS problem, whose nodes represent images)~\cite{hu2018distributable,serlin2020distributed}. In particular, \cite{hu2018distributable} uses such ideas for a distributed implementation of \cite{MatchALS}; however, it is not scalable when run on a single machine (it is typically only 10 times faster than  \cite{MatchALS}).
Furthermore, \cite{serlin2020distributed} utilizes QuickMatch~\cite{tron2017fast} on the graph of keypoints to robustly find keypoint matches. However,~\cite{tron2017fast} clusters the keypoints through a for loop over all keypoint matches; thus matching errors may accumulate along this sequential and heuristic procedure. Both \cite{serlin2020distributed} and \cite{tron2017fast} also require  additional keypoint features, so they are not standard PPS algorithms that are purely based on cycle-consistency information in the keypoint graph.
Despite these original and innovative directions, we believe there is still more room to quantitatively explore the structure of this graph and accelerate the standard PPS algorithms to handle large-scale SfM matching data.

The goal of this work is to propose a rather simple and fast method to solve a weaker formulation of the  partial permutation synchronization problem under high corruption. The strategies we propose do not seek to capture all correct keypoint matches but rather find a good, consistent subset of them. Specifically, we seek the intersection of the good matches and the given matches. Most importantly, this method can be effectively used within the SfM pipeline, since one does not need all keypoint matches to estimate downstream quantities, like the fundamental matrix. Our method is motivated by some mathematical insights that have not been fully leveraged in the existing literature.

\subsection{Previous Works}

Several methods were proposed for solving very special instances of multi-object matching by permutation synchronization  \cite{deepti,PPM_vahan,irgcl}; these instances are inapplicable to SfM. 
PPS is more relevant for the general setting of multi-object matching, although it is applied to SfM in a cumbersome way: The estimated absolute partial permutations are used to estimate the ground-truth relative partial permutations and consequently the ground-truth keypoint matches.
In order to solve PPS, several works extended convex relaxation methods for permutation synchronization by relaxing permutation matrices to doubly stochastic matrices 
\cite{chen_partial,MatchALS,MatchADMM_RTR}. However, these methods are often inaccurate, unrobust to corruption, slow and not scalable for practical instances of SfM.
Nonconvex methods for PPS 
include \cite{MatchEig,Nonneg_factor,ConsistentFeature, HIPPI}. 
The spectral method \cite{deepti} and the projected power method (PPM) \cite{PPM_vahan,Chen_PPM} for permutation synchronization can be extended to PPS, but they are not sufficiently accurate and scalable and also require an estimate of the unknown number of total unique keypoints.
Other recent works begin to examine consistency constraints of the underlying keypoint graph~\cite{hu2018distributable,serlin2020distributed} and we discussed them above. Among the aforementioned PPS algorithms, the two fastest ones are  \cite{MatchEig} and \cite{deepti}. However, these and other spectral-based PS/PPS methods \cite{Chen_PPM, PPM_vahan} require computation of the top $m$ eigenvectors, where $m$ is the number of unique keypoints, which is also called the universe size. These eigenvectors form a dense matrix of size $N \times m$ where $N$ is the total number of keypoints in all images. For large SfM problems, $N$ and $m$ can be of order $10^6$ and $10^5$, respectively, so the memory requirement is $>100$ GB memory 
and cannot be addressed by a common personal computer. Thus, in order to handle the large-scale SfM data, it is crucial to only involve sparse matrix operations in PPS algorithms, which has not been considered in previous works.  

\subsection{This Work}

Here we summarize the main contributions of this work.

\begin{itemize}
    \item We propose new path counting statistics motivated by the cycle consistency structure of the multi-object matching problem formulation. These statistics are designed to yield separated values for good and bad keypoint matches, so that a hard thresholding method can be easily applied. Most notably, we propose a novel way to incorporate cross-keypoint paths that yields better separation between inliers and outlier values. This is the first methodology developed to be robust at the level of keypoint matches rather than at the level of partial permutations.
    \item We demonstrate how to efficiently compute the statistics for massive datasets in a completely decentralized way. The method involves sparse matrix multiplications and can be efficiently parallelized. The time and space complexity of our method is significantly lower than other methods given sparse initial matches.   We further propose a novel iterative reweighting procedure to refine our path-counting statistics.
    \item We propose a novel synthetic model of SfM data that more realistically mirrors real scenarios while allowing control of parameters. Our method is competitive even though it does not require the number of keypoints $m$ as input, unlike other methods.
    \item Our method achieves state-of-the-art performance on various real datasets in both accuracy and speed. The method improves the estimation of camera location and rotation when applied to the city-scale Photo Tourism database of \cite{photo_tourism, 1dsfm14}.
\end{itemize}

\subsection{Structure of the Rest of the Paper}
Section \ref{sec:problem_setup} provides a mathematical setup and, in particular, reviews the PPS problem and a broader setting that we aim to address, where we do not solve for the absolute partial permutations. Section \ref{sec:novel_stat} describes new statistics for removing bad keypoint matches and a practical algorithm that applies them. We motivate these statistics by explaining the geometric structure and, more specifically, cycle-consistency of the underlying graph. 
Section \ref{sec:numeric} gives experiments on synthetic and real data that demonstrate the utility and efficiency of these statistics. Finally, \S\ref{sec:conclusion} concludes this work, while discussing open directions.

\section{Problem Setup}
\label{sec:problem_setup}

We assume $n$ images $I_1$, $\ldots$, $I_n$ of a 3D scene, $m$ scene points, and that an algorithm has identified $m_i \leq m$ keypoints in each image $I_i$, $i \in [n]$ that aim to describe a subset of the $m$ scene points. Let $\bX_i^{*} \in \mathbb{R}^{m_i \times m}$ describe the ground-truth matches between scene points and keypoints in image $I_i$. More precisely, its $kl$-th entry is 1 if scene point $l$ corresponds to image keypoint $k$ in image $I_i$, and 0 otherwise. Note that $\bX_i^{*}$ is a partial permutation matrix, that is, it is binary with at most one nonzero element at each row and column. Here in the rest of our notation we use the $*$ superscript to designate ground-truth information, which is unknown to the user.

Given images $I_i$ and $I_j$, we denote the partial permutation that matches keypoints between these images by $\bX_{ij}$. Its $kl$th element is 1 if keypoint $k$ in image $i$ corresponds to keypoint $l$ in image $j$ and 0 otherwise. We think of it as an estimate of the ground-truth partial information $\bX^*_{ij} =  \bX_i^{*} \bX_j^{*\top}$. 
We denote by $N$ the total number of keypoints across all images and form a block matrix for the total keypoint matching $\bX =
(\bX_{ij})_{i,j=1,\dots n} \in \{0,1\}^{N \times N}$. Similarly, we denote $\bX^* =
(\bX^*_{ij})_{i,j=1,\dots n} \in \{0,1\}^{N \times N}$. 
In the input to our problem, two keypoints in the same image are never connected, and thus we set the block diagonal regions of $\bX^*$ and $\bX$ to be $0$.

The PPS formulation for multi-object matching asks to estimate the ground-truth absolute partial permutations $\{\bX_i^{*}\}_{i=1}^n$ given $\bX$, or equivalently, finding the ground-truth relative partial permutation $\bX^*$ from $\bX$. However, for large and sparse $\bX$, the corresponding ground truth $\bX^*$ can be much denser than $\bX$, which makes standard PPS algorithms slow and memory-demanding. Moreover, for SfM, the overly dense matches may largely increase the computational burden of robust algorithms for fundamental matrix estimation. Thus, it is sufficient and in fact natural to try to use the good matches that already exist in the given sparse matches $\bX$. For this reason, our method aims to solve for the intersection between $\bX^*$ and $\bX$. Namely, we seek the sparse matrix of good relative matches $\bX_g: = \bX^*\odot \bX$, where $\odot$ is the elementwise (Hadamard) product.

We form a graph $G = ([N], E)$ whose nodes in
$[N]$ ($[N] = \{1,\ldots,N\}$) index the set of all keypoints in all images and whose edges represent matches between these keypoints. Such a graph was used before in~\cite{hu2018distributable,serlin2020distributed,tron2017fast}.
The matrix $\bX$ is the adjacency matrix for this graph, and we note that this graph is different from the common graph for PPS \cite{chen_partial, Huang13, MatchALS}, whose nodes correspond to images. 

Our corruption model assumes within $G$ good (correct) and bad (incorrect) keypoint matches. We thus partition the set of edges into two parts $E = E_g \cup E_b$: $E_g$ denotes the good edges and $E_b$ denotes the bad edges. The corresponding good and bad graphs are $G_g([N], E_g)$ and $G_b([N], E_b)$, respectively. 
Let $\bX_g$ and $\bX_b$ be the adjacency matrices of $G_g([N],E_g)$ and $G_b([N],E_b)$ with blocks $\{\bX_{ij,g}\}_{i,j=1}^n$ and $\{\bX_{ij,b}\}_{i,j=1}^n$, respectively, so that $\bX = \bX_g + \bX_b$ and similarly $\bX_{ij} = \bX_{ij,g} + \bX_{ij, b}$
for $i$, $j \in [n]$.  As above, the block diagonal regions of $\bX_g$ and $\bX_b$ are always $0$.

One can view this model as elementwise-corruption of $\bX^*$, where each $ij \in E$ $\bX_{ij}$ is potentially corrupted, instead of the inlier-outlier corruption model \cite{chen_partial} that assumes corruption of $\bX^*$ at the block level. We are not aware of any previous PPS work that focus on this elementwise-corruption model.  
In view of this model and the above discussion, our problem is the estimation of $\bX_{g}$ given the measurement matrix $\bX$. In simple words, we seek to detect good keypoint matches within $\bX$. 
Examples that further demonstrate this setting appear in the supplementary material.
 
In the following, we will develop novel statistics to filter bad edges in $G$. 
To formalize our ideas, we use a few different graphs, we summarize them below in Table~\ref{tab:graphref}, even though the last two are defined later. 

\begin{table}[h]
\centering
\resizebox{0.9\columnwidth}{!}{
\begin{tabular}{ c | c | c }
 \bf Graph & \bf Definition & \bf Adj.  \\ \hline
 $G_g([N], E_g)$ & Good keypoint matches & $\bX_g$ \\ \hline
 $G_b([N], E_b)$ & Bad keypoint matches & $\bX_b$ \\ \hline
 $G([N], E)$ & Observed keypoint matches & $\bX$\\ \hline
 $G_D([N], E_D)$ & Within image matches (see \eqref{eq:def:D}) & $\bD$    \\ \hline
 $\hat{G}^*([N], \hat{E}^*)$ & Minimal cycle-consistent   & $\hat{\bX}^*$ \\ 
 & graph containing $G_g$ &
\end{tabular}
}
\caption{Graphs used throughout the paper. Adj. is an abbreviation for adjacency matrix. All graphs are defined on the same set of nodes $[N]$, which indexes the set of keypoints across all images.}\label{tab:graphref}
\end{table}

\section{Novel Statistics for Removing Bad Matches}
\label{sec:novel_stat}

Our method aims to remove bad keypoint matches through novel statistics not yet exploited in the literature, and we accomplish this by examining the structure of the graph $G$ described in \S\ref{sec:problem_setup}. In \S\ref{subsec:cycleconsist}, we describe the notion of cycle consistency and how it fits with our graph $G$. Then, in Sections~\ref{subsec:withinclust} and~\ref{subsec:crossclust}, we formulate the novel statistics, $\bS_1$ and $\bS_2$, that take advantage of two different consistency constraints. In \S\ref{subsec:combined}, we show how we combine these statistics.
The supplementary material illustrates the usefulness of the proposed statistics and their combination for a simple motivating example. Finally, \S\ref{sec:complexity} discusses the computation of these statistics in practice.

\subsection{Cycle Consistency}
\label{subsec:cycleconsist}

We utilize the notion of \emph{cycle consistency} to filter out bad keypoint matches. We cannot use the common notion of cycle consistency in partial permutation synchronization \cite{chen_partial}, since we consider a graph whose nodes represent  keypoints instead of images. As we discuss below, we find it more convenient to define a \emph{cycle-consistent graph} instead of a \emph{consistent cycle}. This notion of cycle consistency was explored before in works such as~\cite{hu2018distributable,serlin2020distributed}.

We say that a graph $G'(V',E')$, where $V' \subset \mathbb{N}$, is cycle-consistent if, whenever $i,j,k \in V'$ and $ik,kj\in E'$, then $ij\in E'$. This definition uses a 3-cycle $\{ij,jk,ki\}$, but one can note that the same property holds for higher-order cycles. We verify this claim for 4-cycles, where the extension to higher-order cycles easily follows by induction. Given $i,j,k,l \in V'$ and assuming that  $ij$, $jk$, $kl \in E'$, then the original definition implies that $ik \in E'$, and since $kl \in E'$, then also $li \in E'$, that is, the 4-cycle $\{ij,jk,kl,li\}$ is in $E'$. We easily note that this definition and its extension to higher-order cycles immediately
imply that cycle-consistent graphs are dense in the following sense:
\begin{proposition}
The connected components of any cycle-consistent graph are complete subgraphs.
\end{proposition}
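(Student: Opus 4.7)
The plan is to reduce the statement to a single induction on path length. Since a connected component is complete precisely when every two distinct vertices in it are joined by an edge, fix a cycle-consistent graph $G'(V',E')$, a connected component $C$, and two distinct vertices $u,v\in C$. By the definition of connectedness, there is a path $u=v_0,v_1,\dots,v_k=v$ in $G'$ with each $v_iv_{i+1}\in E'$. I would then prove by induction on $k\geq 1$ that $uv\in E'$, which immediately gives the proposition.

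The base case $k=1$ is trivial: the path itself supplies the edge $uv=v_0v_1\in E'$. For the inductive step, consider a path $v_0,v_1,\dots,v_{k+1}$ of length $k+1$. The inductive hypothesis applied to the subpath $v_0,v_1,\dots,v_k$ yields $v_0v_k\in E'$. Together with $v_kv_{k+1}\in E'$, the cycle-consistency axiom applied to the triple $(v_0,v_k,v_{k+1})$ (playing the roles of $i,k,j$ in the definition) produces $v_0v_{k+1}\in E'$, finishing the induction.

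There is no real obstacle; the only pedagogical subtlety worth mentioning is that the hypothesis is stated purely for $3$-cycles, yet the conclusion is a global property of connected components. The induction above is exactly the mechanism by which the triangle-closure axiom propagates along a path and forces every pair of reachable vertices to be adjacent. This matches the informal observation made in the paragraph preceding the proposition, which already notes that higher-order cycle-consistency follows from the $3$-cycle version by induction, so the proof of the proposition itself is just one further application of the same idea.
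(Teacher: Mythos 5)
Your proof is correct and takes essentially the same route as the paper, which sketches exactly this induction (closing a path edge by edge via the triangle axiom) in the paragraph preceding the proposition and then treats the completeness of components as immediate. You have merely made the induction on path length explicit; the only cosmetic refinement would be to take a shortest (hence simple) path so that the vertices $v_0,v_k,v_{k+1}$ are guaranteed distinct, a technicality the paper also glosses over.
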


For any graph of good keypoint matches, $G_g([N], E_g)$, one may extend the set $E_g$ and complete its missing edges in each connected component. This naturally results in the smallest cycle-consistent graph $\hat{G}^*([N], \hat{E}^*)$ that contains $G_g$. We denote its adjacency matrix by $\hat{\bX}^*$ (in \S\ref{subsec:withinclust}, we clarify when $\hat{\bX}^*={\bX}^*$). This graph contains the  complete information for solving the PPS problem. However, to solve our problem it is sufficient for us to try to only recover its subgraph $G_g([N], E_g)$. 

\subsection{$\bS_1$ and Within-Cluster Consistency}
\label{subsec:withinclust}

Our first statistic $\bS_1$ for distinguishing between inlier and outlier matches uses ``within-cluster consistency''. We first define this statistic and then motivate it, while explaining the notion of within-cluster consistency. 

Fix a small integer $q\geq 2$, where we later use the default value of 4 in our algorithm. The within-cluster statistic is then defined as
\begin{equation}\label{eq:s1}
    \bS_1 = \bX^q.
\end{equation} 
For $ij \in E$, $\bS_1(i,j)$ counts the number of paths of length $q$ connecting nodes $i$, $j \in [N]$ in $G([N],E)$.
As we explain below, we generally expect that
\begin{equation}
\label{eq:cond_eq_s1}
\bS_1(i,j) \geq \bS_1(k,l) \ \forall 
ij\in E_g, \ kl\in E_b, \text{ when } \bX\approx \bX_g.
\end{equation}
Indeed, good edges are contained in dense subgraphs in the ideal case while bad edges must straddle two dense subgraphs that may not have many connections between them. 

The following fact illuminates the above idea: \begin{fact}
\label{fact_S1}
    The good subgraph $G_g = ([N], E_g)$ and its minimal cycle-consistent extension $\hat{G}^*([N], \hat{E}^*)$ have $m'$ connected components, where $m' \geq m$. Moreover, the rank of $\hat \bX^*$ is $m'$. 
\end{fact}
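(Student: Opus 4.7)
The plan is to establish the three assertions in order: that $G_g$ and $\hat G^*$ share the same connected components, that their common count $m'$ satisfies $m' \ge m$, and that $\mathrm{rank}(\hat{\bX}^*) = m'$.

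For the first assertion, I would invoke the preceding proposition: the minimal cycle-consistent extension $\hat G^*$ is obtained from $G_g$ by completing each connected component of $G_g$ into a clique on the same vertex set. Adding edges only within existing components can neither merge nor split them, so the two graphs induce identical partitions of $[N]$ and in particular share a component count $m'$.

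For $m' \ge m$, I would use the fact that an edge $ij \in E_g$ satisfies $\bX^*_{ij} = 1$, and the identity $\bX^*_{ij} = \bX_i^* \bX_j^{*\top}$ then forces the two endpoints to correspond to a common scene point. Propagating this along paths in $G_g$ shows that every connected component sits inside the preimage of a single scene point under the ground-truth correspondence, yielding a surjection from components onto the set of (observed) scene points. Under the standing assumption that all $m$ scene points are witnessed by at least one keypoint, this gives $m' \ge m$.

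For the rank claim, I would introduce the cluster-indicator matrix $\bY \in \{0,1\}^{N \times m'}$ with $\bY_{ic} = 1$ iff keypoint $i$ lies in component $c$. Each keypoint is in exactly one component, so the columns of $\bY$ are indicators of disjoint nonempty subsets of $[N]$ and are therefore linearly independent, giving $\mathrm{rank}(\bY) = \mathrm{rank}(\bY\bY^{\top}) = m'$. By the proposition, $(\bY\bY^{\top})_{ij} = 1$ precisely when $i,j$ lie in a common clique of $\hat G^*$, which is the edge pattern of $\hat{\bX}^*$ on inter-image pairs; moreover, since each $\bX_i^*$ is a partial permutation, every component meets any fixed image in at most one keypoint, so the intra-image off-diagonal entries of $\bY\bY^{\top}$ also vanish. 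The factorization $\hat{\bX}^* = \bY\bY^{\top}$ (modulo self-loops on the block diagonal) then reads off $\mathrm{rank}(\hat{\bX}^*) = m'$.

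The main obstacle is the rank computation, because one must reconcile the adjacency-matrix convention for $\hat{\bX}^*$ (with zero block diagonal) against the clean outer-product form $\bY\bY^{\top}$. The single observation that unlocks this step is the partial-permutation structure of each $\bX_i^*$, which confines every cluster to at most one keypoint per image and therefore restricts any discrepancy between $\hat{\bX}^*$ and $\bY\bY^{\top}$ to self-loops on the block diagonal, leaving the column space — and hence the rank — unchanged.
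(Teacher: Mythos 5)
Your argument follows the same route the paper intends: the paper's entire justification of this fact is the one-line observation that keypoints of different scene points are never connected in $G_g$, and your expansion of it — clique completion preserves the components, each component lies over a single scene point (so $m'\ge m$, under the implicit assumption that every scene point is witnessed by at least one keypoint), and the indicator factorization $\bY\bY^\top$ with $\rank(\bY)=m'$ — is exactly the reasoning being gestured at, worked out in more detail than the paper gives.

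There is, however, one step that fails as written: the closing claim that restricting the discrepancy between $\hat{\bX}^*$ and $\bY\bY^\top$ to diagonal self-loops leaves ``the column space --- and hence the rank --- unchanged.'' It does not. For a single component of size $k\ge 2$, the completed block is $\boldsymbol{1}_k\boldsymbol{1}_k^\top$, of rank $1$, whereas deleting the self-loops gives $\boldsymbol{1}_k\boldsymbol{1}_k^\top-\bI_k$, whose eigenvalues are $k-1$ and $-1$, hence full rank $k$. So if $\hat{\bX}^*$ is read as a genuine zero-diagonal adjacency matrix, its rank is $N$ minus the number of singleton components, which is generally much larger than $m'$. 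The rank-$m'$ conclusion is correct precisely for the completed matrix $\bY\bY^\top$, i.e., when $\hat{\bX}^*$ is taken with unit diagonal (identity on the block diagonal); this is also the reading under which the paper's statement itself is true, and the one it implicitly uses when it speaks of estimating $\hat{\bX}^*$ by a rank-$m$ approximation of $\bX$. The repair is therefore not to argue that the diagonal is irrelevant to the rank (it is not), but to fix the unit-diagonal convention for $\hat{\bX}^*$; with that convention your factorization $\hat{\bX}^*=\bY\bY^\top$ holds exactly and the rest of your proof goes through.
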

This fact follows from the observation that keypoints of different scene points are not connected in $G_g$. This fact further 
implies that a good keypoint match, $ij \in  E_g$, should be within a cluster (a dense subgraph). Therefore there should be many short paths connecting $ij$. On the contrary, a bad match that corresponds to a bad edge $ij$ should be between two clusters, and therefore there should be fewer short paths connecting $ij$. 

Ideally, $m=m'$, in which case $\hat \bX^* = \bX^*$, but in practice $m'$ might be larger as some disconnected subclusters may occur due to the sparsity of the observed matches. 
Fact \ref{fact_S1} suggests a stochastic block model for the underlying structure of $\bX$ that can be revealed by spectral methods \cite{deepti, chen_partial}. These methods often assume a relatively dense $G_g$ so that $m'=m$. Then, by finding a rank $m$ approximation of $\bX$, one can estimate $\hat{\bX}^*$. However, $m$ is unknown in practice and the observed $\bX$ is not exactly low rank due to corruption and sparsity in the observation. Moreover, in large-scale datasets, complete recovery of $\hat{\bX}^*$ is unnecessary and requires large amounts of memory and computational time. 

\subsection{$\bS_2$ and Cross-Cluster Consistency}
\label{subsec:crossclust}

We define our second statistic $\bS_2$ and then motivate it in view of what we call cross-cluster consistency. 
We arbitrarily fix two nonnegative integers $r,s$ such that $r+s=q$ (where $q$ was fixed in \S\ref{subsec:withinclust}, and our default values are $r=s=2$). Let $ \bD$ be a block diagonal matrix with the same block sizes as $\bX$, whose diagonal blocks are
\begin{equation}
\label{eq:def:D}
\bD_{ii} = \boldsymbol{1}_{m_i} \boldsymbol{1}_{m_i}^T - \boldsymbol I_{m_i},
\text{ for } i\in [n]
\end{equation}
where $\boldsymbol{1}_{m_i}$ denotes a column vector of ones in $\R^{m_i}$ and $\boldsymbol I_{m_i}$ denotes the $m_i \times m_i$ identity matrix.  Let $G_D$ denote the graph whose adjacency matrix is $\bD$. This graph connects all pairs of distinct keypoints that belong to the same image, which is in contrast to the graph $G$ connects points across different images.
We define our second statistic as 
\begin{equation}\label{eq:s2}
    \bS_2=\bX^r\bD\bX^s.
\end{equation}

Note that for $ij \in E$, $\bS_2(i,j)$ is the number of paths of length $q+1$ connecting nodes $i$ and $j$ composed of an $r$-length path in $G([N],E)$, then an edge in $E_D$ that connects nodes in the same image, and then an $s$-length path in $G([N],E)$. 
Due to this interpretation and the fact stated below we expect that 
\begin{equation}
\label{eq:cond_eq_s2}
\bS_2(i,j) = 0 \leq \bS_2(k,l)
 \ \forall ij\in E_g, \ kl\in E_b,
  \text{ when } \bX\approx \bX_g.
\end{equation}

\begin{fact}\label{fact:cross}
    The good subgraph $G_g$ and its dense version $\hat{G}^*$ do not contain any path that connects two keypoints in the same image. 
\end{fact}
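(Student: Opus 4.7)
The plan is to trace what a good edge says about the underlying ground-truth correspondence and then propagate this along a path. The central observation is that each $\bX_i^*$ is a partial permutation matrix, so distinct keypoints within a single image $I_i$ either correspond to no scene point at all or correspond to \emph{distinct} scene points. This rigid constraint is what ultimately forbids the paths in question.

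First I would unpack what an edge in $G_g$ encodes. Since $\bX_g = \bX^* \odot \bX$, an edge between keypoint $k$ of image $I_i$ and keypoint $l$ of image $I_j$ requires $(\bX_{ij}^*)_{kl} = 1$. Using $\bX_{ij}^* = \bX_i^* \bX_j^{*\top}$, this entry equals $1$ precisely when there is a (necessarily unique) scene point $p$ with $(\bX_i^*)_{kp} = (\bX_j^*)_{lp} = 1$. In other words, every good edge connects two keypoints that represent the same scene point. I would then proceed by induction on path length: if keypoints $v_0, v_1, \dots, v_t$ form a path in $G_g$, every consecutive pair shares a scene point, and by transitivity all of $v_0, \dots, v_t$ correspond to the \emph{same} scene point. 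If $v_0$ and $v_t$ both lie in a common image $I_i$, then two distinct keypoints of $I_i$ would correspond to the same scene point, contradicting the partial-permutation property of $\bX_i^*$. This handles the case of $G_g$.

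Next I would extend to $\hat{G}^*$. By Proposition~1, any cycle-consistent graph decomposes into connected components that are complete subgraphs, so the minimal cycle-consistent extension of $G_g$ is obtained by completing each connected component of $G_g$ into a clique, without merging distinct components. Hence the vertex partition into connected components is identical for $G_g$ and $\hat{G}^*$. Since no two keypoints of the same image lie in a common component of $G_g$, the same remains true for $\hat{G}^*$, and thus no path of $\hat{G}^*$ can connect two keypoints sharing an image.

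There is no real obstacle: the proof is essentially a bookkeeping argument about partial permutations and connected components. The only point where I would pause to be careful is the non-merging claim for the minimal cycle-consistent extension — i.e., verifying that completing components into cliques never creates a new edge bridging two previously separate components. This falls out cleanly from the definition of ``smallest'' cycle-consistent extension together with Proposition~1, but it is worth stating explicitly so the reader does not worry that the extension could accidentally connect keypoints within the same image.
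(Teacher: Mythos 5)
Your proof is correct and follows essentially the same route as the paper, which simply asserts the fact as obvious because such a path would require either a bad keypoint match or a within-image match; your scene-point--tracking induction using the partial-permutation structure of $\bX_i^*$, together with the observation that $\hat{G}^*$ is obtained by completing the connected components of $G_g$ into cliques without merging them, is precisely the formalization of that remark. No gaps.
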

This fact is obvious as such a path would require either a bad keypoint match or a within image match.
One thus expects that for $ij \in E_g$ $\bS_2(i,j)=0$ as otherwise there is a path with an  edge in $G_D$ that connects between two disconnected subgraphs of $G_g$. 
To the best of our knowledge, this fact has not yet been fully utilized and this turns out to be key for our method. 

To quantify the above idea more precisely, we give the following proposition, whose proof is in the supplemental material.
\begin{proposition}\label{prop:D}
If $G_{\text{sub}}([N],E_{\text{sub}})$ is a subgraph of $\hat{G}^*([N], \hat{E}^*)$ with adjacency matrix $\bY$ and $\bS_2^{\text{sub}}:=\bY^r\bD\bY^s$, then 
\begin{align}\label{eq:constr}
    \bS_2^{\text{sub}}\odot\hat{\bX}^* =\boldsymbol 0.
\end{align}
\end{proposition}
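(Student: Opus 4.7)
The plan is to interpret the $(i,j)$-entry of $\bS_2^{\text{sub}} = \bY^r \bD \bY^s$ as a sum over certain walks, and then rule out the existence of any such walk whenever $ij \in \hat{E}^*$.

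First, I would expand
$$\bS_2^{\text{sub}}(i,j) = \sum_{u,v \in [N]} \bY^r(i,u)\, \bD(u,v)\, \bY^s(v,j),$$
so that a positive contribution requires the simultaneous existence of a walk of length $r$ from $i$ to $u$ in $G_{\text{sub}}$, a $G_D$-edge between two \emph{distinct} keypoints $u,v$ in a common image (since $\bD$ has zero diagonal), and a walk of length $s$ from $v$ to $j$ in $G_{\text{sub}}$.

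Next, I would argue by contradiction: suppose $\bS_2^{\text{sub}}(i,j) > 0$ for some $ij \in \hat{E}^*$. Since $G_{\text{sub}}$ is a subgraph of $\hat{G}^*$, the two $G_{\text{sub}}$-walks are also walks in $\hat{G}^*$, so $i$ and $u$ share a connected component of $\hat{G}^*$, and so do $v$ and $j$. Because $\hat{G}^*$ is cycle-consistent, the earlier proposition tells us its connected components are complete subgraphs, and the existence of the edge $ij \in \hat{E}^*$ then places $i$ and $j$ in the same component as well. Chaining these three coincidences, $u$ and $v$ end up in the same connected component of $\hat{G}^*$; hence $\hat{G}^*$ contains a path joining the two distinct same-image keypoints $u$ and $v$, contradicting Fact~\ref{fact:cross}. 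This forces $\bS_2^{\text{sub}}(i,j) = 0$ for every $ij \in \hat{E}^*$, which is exactly \eqref{eq:constr}.

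The only mildly delicate point will be the boundary cases $r=0$ or $s=0$, in which one of the walk-counts $\bY^r(i,u)$ or $\bY^s(v,j)$ degenerates into $\boldsymbol I(i,u)$ or $\boldsymbol I(v,j)$; but these merely collapse $u=i$ or $v=j$, and the conclusion only requires $u$ and $v$ to land in the same connected component of $\hat{G}^*$, which still holds via $ij \in \hat{E}^*$. Beyond this bookkeeping, the proof is essentially an unwinding of the definition of cycle-consistency paired with Fact~\ref{fact:cross}, so no substantive calculation is needed.
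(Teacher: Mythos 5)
Your proof is correct and takes essentially the same approach as the paper's: expand $\bS_2^{\text{sub}}(i,j)$ as a count of walks passing through a within-image edge of $G_D$, then derive a contradiction from cycle-consistency together with the fact that two distinct keypoints of the same image lie in different connected components of $\hat{G}^*$ (Fact~\ref{fact:cross}). The only difference is cosmetic: the paper writes out the case $r=s=1$ at the edge level (completing the $3$-cycles to force $k_1k_2\in\hat{E}^*$) and asserts the general case is identical, whereas you phrase the argument via connected components, which covers general $r,s$, including the degenerate $r=0$ or $s=0$, in one stroke.
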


Proposition \ref{prop:D} implies that  $\bS_2^{\text{sub}}(i,j)$$>0$ is a sufficient condition for $\hat{\bX}^*(i,j)=0$, that is, for $ij$ being a bad edge. The next proposition shows that under some assumptions, it is also a necessary condition. Again, its proof can be found in the supplemental material.
\begin{proposition}\label{prop:D2}
Let $G_{\text{sub}}$, $\bY$ and $\bS_2^{\text{sub}}$ be defined as in Proposition \ref{prop:D}. Assume that $G_{\text{sub}}$ contains the same number of connected components as $\hat{G}^*$ and that, for any two components of $G_{\text{sub}}$, there exists an image that contains at least one node in each component. Then for any $(i,j)$ in the off-diagonal blocks of $\hat{\bX}^*$, $\hat{\bX}^*(i,j)=1$ if and only if $\bS_2^{\text{sub}}(i,j)=0$.
\end{proposition}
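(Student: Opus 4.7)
The plan is to split the biconditional into its two implications and prove each separately. The forward direction, $\hat{\bX}^*(i,j)=1 \Rightarrow \bS_2^{\text{sub}}(i,j)=0$, falls out immediately from Proposition~\ref{prop:D}: applied to $G_{\text{sub}}$ it yields $\bS_2^{\text{sub}}\odot\hat{\bX}^* = \boldsymbol 0$, which forces $\bS_2^{\text{sub}}(i,j)=0$ wherever $\hat{\bX}^*(i,j)=1$.

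For the converse I would argue by contrapositive: suppose $(i,j)$ is off-diagonal (so $i$ and $j$ belong to different images) and $\hat{\bX}^*(i,j)=0$, and show $\bS_2^{\text{sub}}(i,j)>0$. Since the connected components of the cycle-consistent graph $\hat{G}^*$ are complete subgraphs (by the structural proposition proved earlier), and by Fact~\ref{fact:cross} no such component contains two keypoints from the same image, the condition $\hat{\bX}^*(i,j)=0$ for off-diagonal $(i,j)$ is equivalent to $i$ and $j$ lying in distinct components $C_i, C_j$ of $\hat{G}^*$. The equal-number-of-components hypothesis, together with $G_{\text{sub}} \subseteq \hat{G}^*$ on the shared vertex set $[N]$, forces each component of $G_{\text{sub}}$ to coincide, as a node set, with one of $\hat{G}^*$, so $C_i$ and $C_j$ are components of $G_{\text{sub}}$ as well.

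I would then invoke the image-coverage hypothesis to choose an image $I^*$ and vertices $k \in C_i \cap I^*$, $l \in C_j \cap I^*$. Because $k$ and $l$ lie in different components they are distinct, and because they share image $I^*$ we have $\bD(k,l)=1$. Expanding the definition,
\begin{equation*}
\bS_2^{\text{sub}}(i,j) \;=\; \sum_{k',l'} \bY^r(i,k')\,\bD(k',l')\,\bY^s(l',j) \;\geq\; \bY^r(i,k)\,\bY^s(l,j),
\end{equation*}
so the task reduces to exhibiting walks of lengths \emph{exactly} $r$ and $s$ in $G_{\text{sub}}$ from $i$ to $k$ and from $l$ to $j$. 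Existence of \emph{some} walk is automatic since each pair of endpoints lies in a common component of $G_{\text{sub}}$, but matching the prescribed lengths is the main obstacle. I would handle it by exploiting the freedom to choose $k, l$ anywhere in $C_i\cap I^*$ and $C_j\cap I^*$ together with walk-padding (traversing an incident edge twice to bump a walk's length by two), which succeeds because the components of $G_{\text{sub}}$ inherit enough internal structure from those of $\hat{G}^*$ — themselves complete subgraphs — to realize walks of the required lengths.
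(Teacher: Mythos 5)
Your forward direction is exactly the paper's: it is an immediate consequence of Proposition~\ref{prop:D}, and nothing more needs to be said. Your converse also follows the paper's route step for step --- argue by contrapositive, identify $i$ and $j$ as lying in distinct components $C_i$, $C_j$ of $\hat{G}^*$ (hence of $G_{\text{sub}}$, by the equal-component-count hypothesis), invoke the image-coverage hypothesis to obtain $k\in C_i$ and $l\in C_j$ in a common image so that $\bD(k,l)=1$, and lower-bound $\bS_2^{\text{sub}}(i,j)$ by $\bY^r(i,k)\,\bY^s(l,j)$. In fact you are more careful than the paper: its entire argument for this step is the sentence ``there exists a path that goes through $i,k,l,j$ and thus $\bS_2^{\text{sub}}(i,j)>0$,'' which silently assumes that mere connectivity produces walks of the \emph{exact} lengths $r$ and $s$. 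You correctly isolate this as the crux.

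However, your resolution of that crux does not go through as written. The components of $G_{\text{sub}}$ are only subgraphs of the complete components of $\hat{G}^*$; they do not inherit completeness, and walk-padding changes a walk's length by two, so it cannot repair a parity mismatch. Concretely, take $r=s=2$ and suppose the component $C_i$ of $G_{\text{sub}}$ is a single edge $\{i,k\}$ with $k$ the only vertex of $C_i$ lying in the designated image $I^*$ and $i\notin I^*$: every walk of length $2$ from $i$ returns to $i$, so $\bY^2(i,k')=0$ for every admissible $k'$ and the term you need vanishes. The freedom to vary $k$ within $C_i\cap I^*$ does not rescue this, because the hypothesis supplies only one image per pair of components and nothing forces $C_i\cap I^*$ to contain a vertex reachable from $i$ by a walk of length exactly $r$. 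So there is a genuine unresolved step --- but it is precisely the step the paper's own proof also elides; closing it rigorously would require a further assumption (for instance, that every vertex has a neighbour whose image meets the other component, or that the components of $G_{\text{sub}}$ are dense enough to realize all walk lengths of the appropriate parity).
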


The additional assumption of Proposition \ref{prop:D2}  holds when 
each pair of 3D points are contained in at least one image, in which case the $\bS_2$ should be very helpful. In cases where the assumption is not satisfied, in particular when two 3D points at opposing ends of a 3D structure cannot be viewed by a single camera, then one may use the within cluster information of $\bS_1$. This motivates the combination of the two statistics seen in the next section.

\subsection{Filtering by Combining $\bS_1$ and $\bS_2$}
\label{subsec:combined}


We construct our combined statistic as 
\begin{equation}
\bS=(\bS_1\oslash(\bS_1+\bS_2))\odot \bX,    
\end{equation}
where $\oslash$ denotes elementwise division.
That is, for any $ij \in E$, the statistic is the ratio between $\bS_1$ and the sum of the two statistics, whereas for $ij \notin E$, it assigns zero values.

We generally use $r=s$ due to symmetry, so that paths from $i$ to $j$ and $j$ to $i$ are similarly treated. We choose $q=r+s$ so that $\bS_1$ and $\bS_2$ have comparable scales, as the number of steps within $G$ is the same. Also, our combined statistic is nicely scaled between 0 and 1. As we show later in \eqref{eq:stat} and \eqref{eq:compS1S2}, $S_1$ and $S_2$ have similar forms, and consequently the sum of $\bS_1(i,j)$ and $\bS_2(i,j)$ can be efficiently vectorized. In practice, we recommend $r=2$ (or equivalently $q=4$), and parameter-tuning is not needed. This choice corresponds to walks of length $4$, which covers simple paths of length $2$ and $4$ (thus it also covers the paths of $r=1$).   Choosing higher $r$ (longer paths) may increase the computational complexity as $\bX^r$ can be dense for large $r$. Furthermore, in the next paragraph we introduce an alternating improvement strategy that allows ``message passing" among distant edges, thus longer paths are not needed. Similar strategy is validated in \cite{cemp} for a different problem.

In view of \eqref{eq:cond_eq_s1} and \eqref{eq:cond_eq_s2},
we expect that $\bS(i,j)$ is small for bad edges. In particular, $\bS(i,j)\in [0,1]$ can in some sense be interpreted as a ``probability" that $ij\in E_g$. By replacing $\bX$ with $\bS$ in the formulas of $\bS_1$ and $\bS_2$, our original path counting procedure becomes a weighted path counting, where the weights focus on the clean paths. This observation motivates an iterative procedure, where the path weights and $\bS$ alternatingly improve each other. In this procedure, the initial input is $\bX$ and then the input is $\bS$ obtained at the previous iteration. At the last iteration (or without any iteration) one can then filter
the $\bS$-scores above a certain threshold and identify the edges whose final scores are nonzero as good ones. One can also threshold at each iteration. For completeness, Algorithm~\ref{alg:fcc} describes this iterative procedure, which we refer to as Filtering by Cluster Consistency (FCC). It uses the notation $\mathbf{1}(\bS > \tau_t)$ for an $N \times N$ binary matrix whose elements are $1$ whenever $\bS_{ij} > \tau_t$.

\begin{algorithm2e}[]
\DontPrintSemicolon
  \KwInput{$\bX$ matrix of keypoint matches, $T$: number of iterations, $\{\tau_t\}_{t=1}^T$: threshold in each iteration, $\tau$: threshold for computing the output, $q, r, s$: statistic powers}
  \KwOutput{$\bY$ filtered keypoint matches}
  $\bY \gets \bX$\;
   \For{$i=1,\dots,T$}{
   $\bS_1 = \bY^q$, $\bS_2=\bY^r\bD\bY^s$ \;
    $\bS=\bS_1\oslash(\bS_1+\bS_2)\odot \bX$ \;
    $\bS = \mathbf{1}(\bS > \tau_t)$ (optional)  \;
    $\bY \gets \bS$ \;
   }
   $\bY \gets \mathbf{1}(\bY > \tau)$
\caption{Filtering by Cluster Consistency (FCC)}
\label{alg:fcc}
\end{algorithm2e}

In practice we find that FCC with soft reweighting (no iterative thresholding) works best in general. In this case, the only parameters left are the number of iterations and the final threshold $\tau$. Allowing different $\tau$'s makes FCC a very flexible algorithm, which we explain later in \S\ref{sec:epfl}. We recommend 10 iterations for soft-reweighting. However, there are two cases where iterative-hard thresholding (the optional step) can be useful. First, for highly-corrupted full-permutation synchronization datasets, hard-thresholding can slightly improve the accuracy, as we explain in \S\ref{sec:willow}. Second, for large datasets, where we want to have minimal passes through the whole data, hard-thresholding the bad edges may accelerate the convergence and fewer iterations are needed (see \S\ref{subsec:exptourism}). 
For the midsize datasets we found that at least four iterations with $\tau_t=0.05t$ are sufficient, and for large-size datasets, we have found that only two iterations with $\tau_t=0.1 t$ are sufficient.

\subsection{On the Complexity of FCC}
\label{sec:complexity}

To calculate our statistics, we note that all matrices in the products are sparse. However, intermediate matrices, such as $\bX^2 \bD$, may be dense. To solve this issue, we notice that we only need to compute $\bS$ at elements $(i,j)$ such that $\bX(i,j)>0$. For this purpose, we use the following formula, which is proved in the supplementary material.
\begin{lemma} \label{lemma:Scomp}
For any $ij\in E$, and $q = r + s$, 
\begin{align}\label{eq:stat}
    \bS_1(i,j) = \sum_{l\in [n]}\sum_{\substack{k_1 = k_2\\ k_1, k_2 \in I_l}}\bX^r(i,k_1)\bX^s(k_2,j),\nonumber\\
     \bS_2(i,j) =\sum_{l\in [n]}\sum_{\substack{k_1 \neq  k_2\\ k_1, k_2 \in I_l}}\bX^r(i,k_1)\bX^s(k_2,j).
\end{align}
\end{lemma}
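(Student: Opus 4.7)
The plan is to derive both identities directly from the definition of matrix multiplication, using the partition of the keypoint index set $[N]$ by image membership and the explicit block structure of $\bD$. Since each keypoint belongs to exactly one image, we have the disjoint union $[N]=\bigsqcup_{l=1}^n I_l$, where $I_l\subset[N]$ indexes the keypoints of image $I_l$.

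For $\bS_1$, I would first note that $q=r+s$ gives $\bS_1=\bX^q=\bX^r\bX^s$, so the standard entrywise formula for matrix multiplication yields
\begin{equation*}
\bS_1(i,j)=\sum_{k\in[N]}\bX^r(i,k)\,\bX^s(k,j).
\end{equation*}
Splitting the sum over $k$ according to the image containing $k$ gives $\bS_1(i,j)=\sum_{l\in[n]}\sum_{k\in I_l}\bX^r(i,k)\bX^s(k,j)$, which is exactly the claimed expression once one relabels the single index $k$ as the common value $k_1=k_2$.

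For $\bS_2=\bX^r\bD\bX^s$, I would again expand entrywise:
\begin{equation*}
\bS_2(i,j)=\sum_{k_1,k_2\in[N]}\bX^r(i,k_1)\,\bD(k_1,k_2)\,\bX^s(k_2,j).
\end{equation*}
Now I invoke the explicit form \eqref{eq:def:D}: $\bD$ is block diagonal with $\bD_{ll}=\boldsymbol{1}_{m_l}\boldsymbol{1}_{m_l}^\top-\boldsymbol I_{m_l}$, so $\bD(k_1,k_2)=1$ precisely when $k_1\neq k_2$ and both indices lie in the same image $I_l$ for some $l\in[n]$, and $\bD(k_1,k_2)=0$ otherwise. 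Substituting this indicator into the double sum collapses it to $\sum_{l\in[n]}\sum_{k_1\neq k_2,\ k_1,k_2\in I_l}\bX^r(i,k_1)\bX^s(k_2,j)$, as required.

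There is no genuine obstacle here; the lemma is essentially a reindexing of matrix multiplication that is useful computationally because it lets one iterate over images $l\in[n]$ and only touch nonzero columns of $\bX^r$ and rows of $\bX^s$ restricted to $I_l$. The only point that merits care is the distinction between the diagonal ($k_1=k_2$) and off-diagonal ($k_1\neq k_2$) parts of the intra-image block, which is exactly what separates $\bS_1$ from $\bS_2$ in the statement. Because both formulas in \eqref{eq:stat} share the common kernel $\bX^r(i,k_1)\bX^s(k_2,j)$ differing only by the constraint $k_1=k_2$ versus $k_1\neq k_2$, this decomposition also makes it transparent that $\bS_1+\bS_2$ can be computed by a single pass over all pairs $(k_1,k_2)\in I_l\times I_l$, which is the vectorization alluded to in \S\ref{subsec:combined}.
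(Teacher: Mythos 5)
Your proof is correct and matches the paper's argument: the paper likewise writes $\bS_1=\bX^r\bX^s$ and $\bS_2=\bX^r\bD\bX^s$ entrywise, uses the block-diagonal form of $\bD$ (via the expansion in \eqref{eq:S1comp}) to restrict the double sum to pairs $k_1\neq k_2$ within the same image, and splits the single sum for $\bS_1$ over images to obtain the $k_1=k_2$ case. Your version merely spells out the indicator structure of $\bD$ slightly more explicitly; there is no substantive difference.
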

Note that $\bS_1$ and $\bS_2$ respectively correspond to the complementary cases $k_1=k_2$ and $k_1\neq k_2$, where $k_1$ and $k_2$ are indices of keypoints within the same image. This strong relationship leads to efficient computation of $\bS$.
First, we notice that
\begin{equation}\label{eq:compS1}
     \bS_1(i, j) = \left \langle \bX^r(:,i), \bX^s(:, j) \right \rangle, 
\end{equation}
i.e., $\bS_1(i, j)$ is the dot product of the $i$th and $j$th columns of the symmetric matrices $\bX^r$ and $\bX^s$. 
On the other hand,
\begin{equation}\label{eq:compS1S2}
    \bS_1(i, j) + \bS_2(i, j)  = \sum_{l\in [n]}  \left[ \Big( \sum_{k \in I_l} \bX^r(i,k)\Big) \Big( \sum_{k \in I_l} \bX^s(k,j) \Big)  \right].
    \end{equation}
By stacking the sparse elements corresponding to nonzero $\bX$ values into a vector,~\eqref{eq:compS1} and~\eqref{eq:compS1S2} can be efficiently parallelized at the cost of having computed $\bX^r$ and $\bX^s$ and sufficiently large memory. To compute~\eqref{eq:compS1S2}, we have an additional for-loop over $[n]$, but this is still efficient because $n$ is relatively small in our examples. 

Our method becomes more efficient when $r$ and $s$ are sufficiently small so that the powers $\bX^r$ and $\bX^s$ are sparse. In particular, if $\nnz(\bX)$ denotes the number of nonzero entries in $\bX$ and $n_q$ denotes the average non-zero entries of $\bX^q$ per column, then the time to compute the desired entries of $\bS_1 \odot \bX$ and $(\bS_1 + \bS_2)\odot \bX$ are both $O(\nnz(\bX) (n_r + n_s))$. 
To precompute $\bX^r$ and $\bX^s$, we need $O(\nnz(\bX) \cdot \max(\{n_l : l \leq \max(r,s)-1\}))$.
The overall space complexity is $O(N\max(\{n_l : 1 \leq l \leq \max(r,s)-1\})$.
Notice that we have the elementary bound $\nnz(\bX) \leq n N$, and consequently, the time and space complexity of Algorithm \ref{alg:fcc} are $\leq N n \max(n_r, n_s)$ and $N (n_r + n_s)$, respectively, and these are worst-case bounds. In the supplementary material, we prove the bound $n_2 < n^2$ (for $r=s=2$) for an Erd\"os-R\'enyi model with $p=n/N$.

While powers of a matrix may not be sparse in general, we observe that $\bX^r$ is sparse for $r$ sufficiently small. In particular, we generally use $r=s=2$ in our experiments, and in these cases $\bX^2$ is observed to be very sparse.

\section{Numerical Experiments}
\label{sec:numeric}

We conduct experiments on synthetic and real datasets to verify the effectiveness of the proposed FCC algorithm. In \S\ref{subsec:conv}, we demonstrate on synthetic data the improvement of the classification accuracy by our iterative procedure. Then, in Sections~\ref{sec:epfl},~\ref{sec:willow} and~\ref{subsec:exptourism}, we test the performance of our method on the EPFL\cite{EPFL_data}/Middlebury\cite{Middlebury_data}, Willow~\cite{cho2013} and Photo Tourism~\cite{photo_tourism} datasets. We report the specifications of machines on which we ran the experiments in \S\ref{sec:specs} of the supplementary material.

\begin{table*}[t]
\centering 
\resizebox{2\columnwidth}{!}{
\renewcommand{\arraystretch}{1.3}
\tabcolsep=0.1cm
\begin{tabular}{|l||c|c||c|c||c|c|c|c||c|c|c|c||c|c|c|c||c|c|c||c|c|c||c|c|c||c||}
\hline
 \multirow{ 2}{*}{Dataset} & \multicolumn{2}{c||}{ }& \multicolumn{2}{c||}{\multirow{ 2}{*}{Input}} &
\multicolumn{4}{c||}{\multirow{ 2}{*}{MatchEig}} & \multicolumn{4}{c||}{\multirow{ 2}{*}{Spectral}} &
\multicolumn{4}{c||}{\multirow{ 2}{*}{MatchALS}} &
\multicolumn{10}{c||}{FCC (ours)} 
  \\
&  \multicolumn{2}{c||}{ } &  \multicolumn{2}{c||}{ } &  \multicolumn{4}{c||}{ } & \multicolumn{4}{c||}{ } & \multicolumn{4}{c||}{ } &  \multicolumn{3}{c||}{ $\tau = 0.5$ } &  \multicolumn{3}{c||}{ $\tau = 0.9$ } &  \multicolumn{3}{c||}{ $\tau = 0.99$ } &
  \\\hline
& $n$ & $\hat m                 $ & JD & PR & JD  & PR  &  \#M & RT  &  JD  & PR  &  \#M & RT& JD  & PR  &  \#M & RT& JD  & PR  &  \#M &  JD  & PR  &  \#M & JD  & PR  &  \#M & RT
\\\hline
Dino Ring & 48 & 340 & 25.9 & 74.1 & 44.7 & 93.6 & 46 & 21 &  32.5 & 84.4 & 68& 38 & 26.8 & 85.0 & 73 & 12010 & \textbf{23.2} & 78.6 & 92 & 35.8 & 90.1 & 57& 56.2 & \textbf{94.0} & 35 & \textbf{3}
\\\hline
Temple Ring &47 &396 & 27.4 & 72.6 & 51.8 & 90.1 & 41 &37 & 36.1 & 81.9 & 66 & 62 & 30.3 & 82.2 & 73 & 16137& \textbf{25.9} & 75.5 & 94 & 35.2 & 88.2 & 58 & 49.5 & \textbf{92.4} & 41 & \textbf{2}

\\\hline
\hline

Herz-Jesu-P25 & 25 & 517 & 10.4 & 89.6 & 27.0 & \textbf{94.5} & 72 & 60 &21.8 & 92.3 & 81 & 105 & 18.5 & 93.3 & 83 & 9199 & \textbf{9.7} & 90.7 & 98 & 18.1 & 93.6 & 83 & 35.1 & 94.3 & 64 & \textbf{1}

\\\hline

Herz-Jesu-P8 & 8 & 386 & 5.7 & 94.3 & 7.1 & 95.3 & 96 & 2 & 18.6& 95.0 & 84 & 5 & 25.1 & \textbf{95.9} & 76 & 155 & \textbf{5.4} & 94.6 & 99 & 12.8 & 95.6 & 90 & 17.4 & 95.8 & 84 & $<$\textbf{1}

\\\hline

Castle-P30 & 30 & 445 & 28.2 & 71.8 & 41.2 & 85.1 & 55 & 60 & 32.7& 80.5 & 72 & 98 & 29.3 &  80.4 & 76 & 13583 & \textbf{25.5} & 76.3 & 91 & 35.8 & 87.3 & 58 & 52.0 & \textbf{89.8} & 41 & \textbf{2}

\\\hline

Castle-P19 & 19 & 314 & 29.9 & 70.1 & 43.2 & 80.4 & 58 & 18 & 32.1& 77.8 & 76 & 18 & 34.2 & 77.0 & 74 & 1263 & \textbf{27.0} & 74.2 & 92 & 34.4 & 86.5 & 59 & 45.5 & \textbf{88.2} & 47 & $<$\textbf{1}

\\\hline

Entry-P10 & 10 & 432 & 24.6 & 75.4 & 25.4 & 81.1 & 84 & 19 & 27.7 & 81.4 & 80 & 22 & 35.2 & 77.3 & 77 & 322 & \textbf{24.1} & 77.9 & 94 & 37.0 & 88.0 & 59 & 45.8 & \textbf{88.6} & 50 & $<$\textbf{1}

\\\hline

Fountain-P11 & 11 & 374 & 5.8 & 94.2 & 12.3 & 95.8 & 90 & 14 & 11.1 & 95.5 & 92 & 11 & 20.2 & 95.7 & 82 & 333 & \textbf{5.6} & 95.0 & 99 & 15.0 & 95.9 & 87 & 21.6 & \textbf{96.4} & 79 & $<$\textbf{1}

\\\hline

\end{tabular}}
\caption{Performance on the Middlebury and EPFL datasets. $n$ is the number of cameras; $\hat m$, the approximated $m$, is twice the averaged $m_i$ over $i\in [n]$; JD  and PR respectively refer to the Jaccard distance (the lower the better) and the precision rate (the higher the better) in \eqref{eq:metric_synthetic} in percentage; \#M is the ratio in percentage between the number of refined matches and the number of initial matches; RT is runtime in seconds. } 
\label{tab:real2}
\end{table*}

\subsection{Convergence of the Iterative Procedure}
\label{subsec:conv}

We generate a novel synthetic dataset that models keypoint matches with $100$ 3D scene points uniformly distributed on the unit sphere, and $100$ Gaussian-distributed cameras. Synthetic keypoints are generated by projecting the 3D points onto the image plane. Two keypoints are connected if and only if they correspond to the same 3D point. To generate the corrupted keypoint matches, with probability $0.5$ we independently replace an existing match with a false match.  For simplicity, run the default FCC with $q=4$ and $r=s=2$ without thresholding. 
Figure \ref{fig:hist} demonstrates the histogram of the  FCC statistics after 1 and 5 iterations. 
We can see that even though a large fraction of matches are missing and corrupted, the FCC statistic at the first iteration (the values in $\bS$) already achieves good separation of good and bad matches and there is only a small overlapping area between the two histograms. After only 5 iterations, the FCC statistic obtains a clean separation of good and bad matches that nicely concentrate around $1$ and $0$, respectively. Therefore, thresholding at $0.5$ (or in a large interval around it) gives exact classification of good and bad matches.
\begin{figure}[h]
    \centering
    \includegraphics[width=0.9\columnwidth]{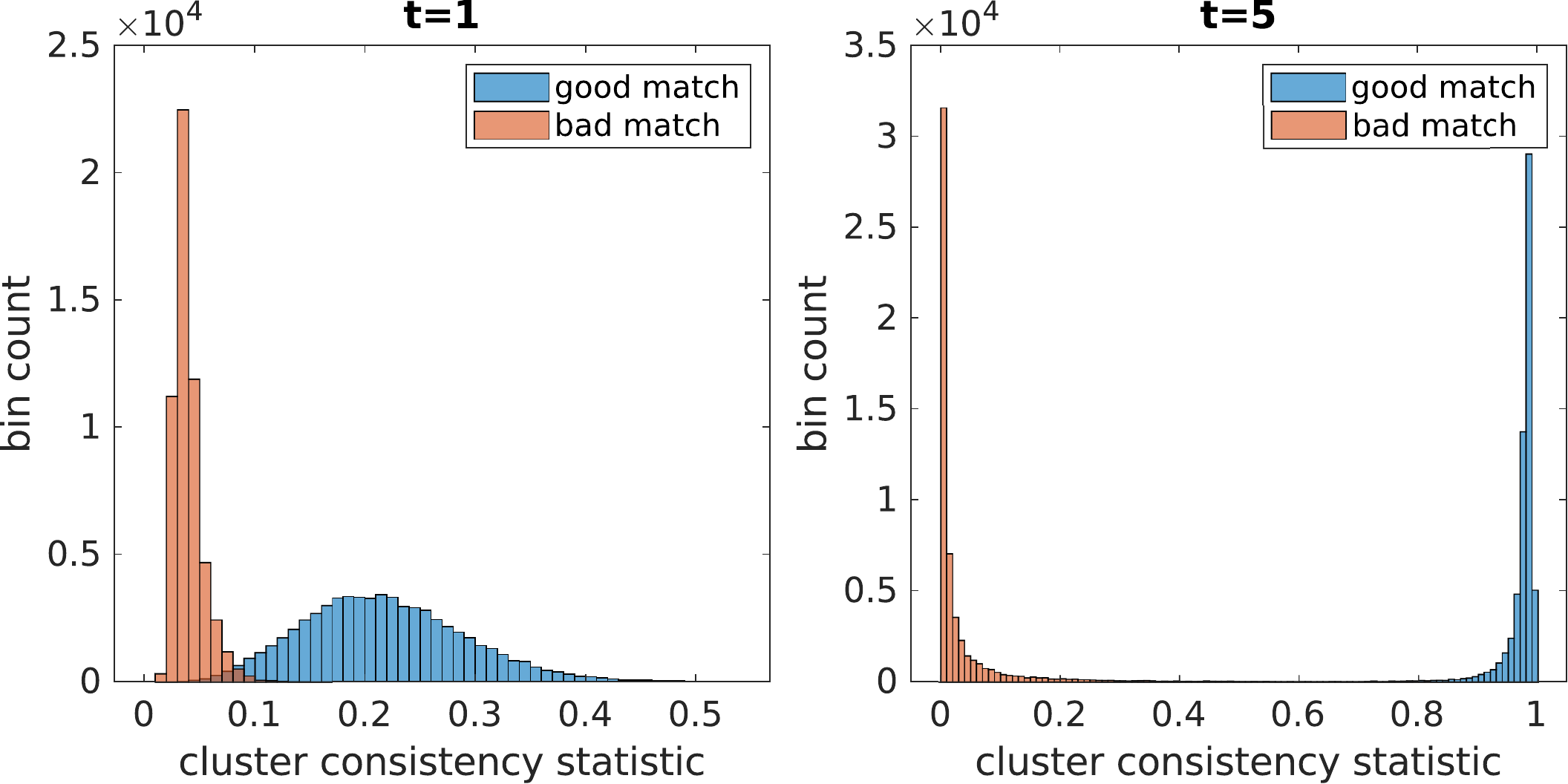}
    \caption{The histograms of the FCC statistics without thresholding for good and  bad matches after 1 (left) and 5 (right) iterations.}
    \label{fig:hist}
\end{figure}
We refer the readers to \S\ref{subsec:expsynth} in the supplementary material for details of the synthetic model, and the comparison of speed and accuracy among different algorithms under different model parameters.

\subsection{Experiments on EPFL and Middlebury}\label{sec:epfl}
We follow the experimental setup of \cite{MatchEig} and compare \cite{deepti, MatchEig, MatchALS} with our method. Each dataset consists of 8 to 48 images.
The Dino Ring and Temple Ring belong to the Middlebury database, and are subsets of the Dino (363 images) and Temple (312 images) datasets. The latter datasets are still much smaller than Photo Tourism datasets in \S\ref{subsec:exptourism} due to their small universe size (around hundreds) compared to that of Photo Tourism ($>10^4$), even though they have similar numbers of images. We do not use the whole datasets of Temple and Dino, since MatchALS cannot handle them and Spectral and MatchEig have similar results on Temple Ring and Dino Ring (see \cite{MatchEig}), and running on those subsets is much faster.

The initial matches between pairs of images are generated by nearest neighbor and ratio test using SIFT features followed by RANSAC refinement. The implementation of Spectral \cite{deepti}, MatchALS \cite{MatchALS} and MatchEig \cite{MatchEig} is exactly the same as in \cite{MatchEig}. Since the ground truth size of the universe $m$ is unknown, we follow \cite{MatchEig} and approximate $m$ by twice the average of $m_i$ over $i\in [n]$. We denote this approximate size of the universe by $\hat m$. Since Spectral, MatchEig and MatchALS  require the rank estimate of $\bX$, we follow \cite{MatchEig}  and use $\hat m$ for Spectral and MatchEig, and $2\hat m$ for MatchALS. We note that FCC does not require this parameter.  We run FCC with soft reweighting (no iterative thresholding) for 10 iterations. We note that one still needs to threshold the statistics matrix $\bS$ in the final step to obtain the refined matching. We test FCC with $\tau=0.5, 0.9, 0.99$. The higher the threshold, the more sparse the resulting match is. 

Following \cite{MatchEig}, when evaluating the estimated matches, a match is good if the epipolar constraint approximately holds given the  two keypoints and ground truth camera parameters. We report two types of metrics, the precision rate (PR) and Jaccard distance (JD) of classification:
\begin{align}
    \text{PR} = |\hat E \cap E_g|/|\hat E|,\quad
     \text{JD} = 1-|\hat E \cap E_g|/|\hat E \cup E_g|, \label{eq:metric_synthetic}
\end{align}
where $\hat E$ is the estimate of $E_g$.
We note that Jaccard distance is a more balanced metric that considers both precision and recall (note that the Jaccard distance is a decreasing function of the F-score). 

We remark that different metrics may fit better different tasks. The PR metric may be more useful for SfM tasks with initial dense matches, since as long as the refined match is good then one can reliably compute fundamental matrices (so one does not care how many good matches were thrown out). However, the combination of both precision and recall may fit better with other tasks, such as permutation synchronization, where all images share the same set of keypoints. In this case, a more natural metric is the JD.
In addition to PR and JD, we report the percentage of initial matches that remains after refinement by different algorithms (\#M in Table \ref{tab:real2}), which partially reflects recall. We also report runtimes of different algorithms in seconds (RT in Table \ref{tab:real2}).

\begin{table*}[t]
\centering 
\resizebox{1.4\columnwidth}{!}{
\renewcommand{\arraystretch}{1.05}
\tabcolsep=0.1cm
\begin{tabular}{|c||c|c|c|c|c|c|c|c|c|c|}
\hline
 & \multicolumn{1}{c|}{$n$}& \multicolumn{1}{c|}{$N$}&
\multicolumn{1}{c|}{Input}&   \multicolumn{1}{c|}{Spectral} &
 \multicolumn{1}{c|}{PPM}& \multicolumn{1}{c|}{MLift} &
 \multicolumn{1}{c|}{MALS}& \multicolumn{1}{c|}{IRGCL} & \multicolumn{1}{c|}{FCC} & \multicolumn{1}{c|}{FCC+PPM}\\
\text{Datasets}& & & &\cite{deepti} & ~\cite{PPM_vahan}&~\cite{chen_partial} & ~\cite{MatchALS}&~\cite{irgcl} &  & \\\hline
Car&40& 400 &0.52 &0.36 & 0.26 & 0.26 &0.28 & 0.25 &0.31 &\textbf{0.24} \\\hline
Duck & 50 &500 & 0.57 & 0.34 & 0.34 & 0.33& 0.34 &\textbf{0.30} &0.40 &0.35 \\\hline
Face &108& 1080& 0.14 & \textbf{ 0.041} & 0.046 &0.054 &0.055 &0.048&0.057 & 0.046\\\hline
Motorbike &40 &400 & 0.7 & 0.65 & 0.61 &\textbf{0.57} &\textbf{0.57} &0.63  &0.64 & \textbf{0.57} \\\hline
Winebottle & 66 & 660 & 0.48 & 0.29& 0.26& 0.25 &0.25 &0.24 & 0.28 & \textbf{0.23}
\\\hline
\end{tabular}
}
\caption{Matching performance comparison using the Willow database. Note that $m=10$ and $N=10n$. }\label{tab:willow}
\end{table*}

Table \ref{tab:real2} indicates superior performance of FCC in comparison to other methods in terms of both accuracy and speed.
We first note that FCC is in general 10x - 100x faster than the current fastest approaches MatchEig and Spectral, and is about 5000x - 10000x faster than MatchALS, on the datasets of EPFL and Middlebury. Moreover, FCC with $\tau=0.5$ achieves the best Jaccard distance, namely the classification error. Choosing higher $\tau$  removes more matches, which corresponds to higher precision and lower recall. When choosing $\tau=0.99$, FCC yields better precision than other methods and still maintains around 50\% of matches on most datasets. Thus, depending on the tasks, one can choose either $\tau=0.5$ or $\tau=0.99$ (or some other values) to encourage better performance on either Jaccard distance or precision (or balance between the two). For this reason, we find that FCC is a very flexible algorithm that can be suitable for different tasks, and for both choices FCC achieves the best performance than other methods.

We also find that MatchEig generally performs better than Spectral. The reason is that MatchEig is less sensitive to the parameter $\hat m$, as explained in \cite{MatchEig}. MatchALS is the slowest algorithm, and its error and precision are worse than MatchEig and similar to Spectral. We note that MatchALS only achieves better PR than FCC on Herz-Jesu-P8, where it maintains fewer matches (76\%) than FCC (84\%).

\subsection{Experiments on the Willow Database}
\label{sec:willow}

We test FCC  on the Willow database~\cite{cho2013} for multi-object matching. It includes 5 small datasets, where each dataset consists of dozens of images taken from similar viewing directions and each image contains 10 keypoints of the same 10 3D scene points. The ground truth pairwise matches are all $10 \times 10$ full-permutation matrices (rows and columns sum to 1), so common permutation synchronization algorithms can be applied. We use the initial matches provided by \cite{ConsistentFeature}
(these matches were obtained by rounding the similarity matrix of CNN features of the keypoints, however, we do not compare with \cite{ConsistentFeature} since it uses additional geometric information from keypoint coordinates).  We run FCC with $T=10$, $\tau_t=0.05t$ and $\tau =0.5$. We compared with the following methods for permutation synchronization: Spectral  \cite{deepti}, PPM \cite{PPM_vahan}, IRGCL \cite{irgcl}, MatchLift \cite{chen_partial} and MatchALS \cite{MatchALS}. Since they have the advantage of using the permutation synchronization model and they use the parameter $m$, we also tested FCC as an initializer to one of these methods, PPM. We preferred PPM since it seems more sensitive to initialization. We refer to the combined method, which first removes a small fraction of matches with low values of the FCC statistic and then applies PPM, as FCC+PPM. For the combined algorithm, we run FCC with $T=4$ only, $\tau_t=0.05t$ and $\tau=0.1$. By choosing such a low threshold, we only remove the extremely suspicious matches, while keeping the majority of matches. The reason for doing this is that the dataset is extremely noisy, and a large threshold would remove some good edges so that pairwise matches may be too sparse and will not give rise to permutations.  Thus a large threshold for FCC will degrade any follow-up algorithm for full-permutation synchronization, for which we use PPM. We use the metric of \eqref{eq:metric_synthetic} to measure accuracy. Table \ref{tab:willow} summarizes the results. We note that Spectral, PPM and IRGCL directly use the special structure of full-permutations and rely on the Hungarian algorithm to project the estimates to full-permutations. In contrast, FCC does not make these assumptions since it is designed for more general PPS. Without these additional information ($m$ and full-permutations), mere FCC is a reasonable algorithm that is roughly comparable to  Spectral.  We also note that FCC significantly improves PPM and the combined algorithm outperforms on average the rest of the algorithms.

We remark that the thresholding procedure in FCC is helpful in Willow. The reason is that graphs for the Willow datasets are relatively dense and the FCC statistics for most good matches are strictly above 0. Thus, we take a conservative strategy by using a small threshold (0.05) in the first iteration to make sure that only bad matches are removed and then gradually increase the threshold in each iteration.

\begin{table*}[h]
\centering 
\resizebox{1.7\columnwidth}{!}{
\renewcommand{\arraystretch}{1.3}
\tabcolsep=0.1cm
\begin{tabular}{|l||c|c||c|c|c|c|c|c||c|c|c|c|c|c|c|}
\hline
Algorithms & \multicolumn{2}{c||}{}& \multicolumn{6}{c||}{LUD} &
\multicolumn{7}{c|}{FCC+LUD} 
  \\\hline
\text{Dataset}& $n$ & $N$  & $n$& {\large $\hat{e}_R$} & {\large$\tilde{e}_R$} & {\large $\hat{e}_T$} & {\large$\tilde{e}_T$}  & $T_{\text{total}}$& $n$ &
{\large $\hat{e}_R$} & {\large$\tilde{e}_R$} & {\large $\hat{e}_T$} & {\large$\tilde{e}_T$} &  $T_{\text{FCC}}$ &  $T_{\text{total}}$
\\\hline
Alamo & 570 & 606963 
& 557 & 20.90 & 16.10 & 8.17 & 5.18 & 7945.9
& 538 & \textbf{19.16} & \textbf{15.23} & \textbf{7.81} & \textbf{4.92} & 755.1 & 8788.5
\\\hline

Ellis Island& 230 & 178324
& 223 & 2.16 & 1.16 & 22.99 & \textbf{21.95} & 1839.2
& 218 & \textbf{1.87} & \textbf{1.02} & \textbf{22.78} & 22.20 & 79.6 & 1904.6
\\\hline

Gendarmenmarkt& 671 & 338800
& 652 & \textbf{40.14} & 9.30 & \textbf{38.55} &  \textbf{18.33} & 3527.7
& 625 & 40.31 & \textbf{8.48} & 38.82 & 18.58 & 122.9 & 3746.1
\\\hline

Madrid Metropolis & 330 & 187790
& 315 & 13.35 & 9.37 & 12.80 & 6.87 & 1579.8
& 297 & \textbf{11.51} & \textbf{6.87} & \textbf{11.69} & \textbf{4.99} & 44.2 & 1615.9

\\\hline
Montreal N.D.& 445 & 643938 
& 439 & 2.55 & 1.06 & 1.51 & \textbf{0.66} & 5078.9
& 416 & \textbf{1.72} & \textbf{0.89} & \textbf{1.34} & 0.67 & 550.5 & 5630.6

\\\hline
Notre Dame & 547 & 1345766 
& 545 & 3.72 & 1.44 & 1.45 & 0.41 & 11315.2
& 534 &\textbf{3.46} & \textbf{1.40} & \textbf{1.35} & \textbf{0.40} & 4103.3 & 14965.5
\\\hline
NYC Library&  313 & 259302 
& 306 & 4.05 & 2.22 & 7.09 & 2.81 & 1495.9
& 283& \textbf{3.43} & \textbf{2.07} & \textbf{6.48} & \textbf{2.43} & 47.6 & 1536.5

\\\hline
Piazza Del Popolo & 307 & 157971 
& 300 & 6.92 & 3.98 & 6.78 & 2.42 & 1989.9
& 243 &\textbf{1.76} & \textbf{0.90} & \textbf{2.36} & \textbf{1.30} & 74.4 & 2053.9

\\\hline
Piccadilly&  2226 &  1278612
& 2015 & 8.11 & 3.77 & 5.41 & 2.98 & 21903.3
& 1928 &\textbf{7.17} & \textbf{3.67} & \textbf{5.04} & \textbf{2.80} & 1190.7 & 26750.7
 
\\\hline
Roman Forum&  995 & 890945
& 971 &\textbf{6.64} & \textbf{5.02}& \textbf{12.67} & \textbf{5.60} & 4858.0
& 906 & 6.67 & 5.39 & 13.29 & 5.66 & 153.6 & 5315.5

\\\hline
Tower of London& 440 & 474171 
& 431 &\textbf{6.89} & 4.29 & 21.47 & 6.85 & 1759.2
& 408 &7.17 & \textbf{4.12} & \textbf{19.14} & \textbf{6.52} & 35.5 & 1825.1

\\\hline
Union Square& 733 &  323933
& 663 &10.77 & 6.93 & \textbf{14.52} & 10.49 & 1950.6
& 599 &\textbf{8.31} & \textbf{5.91} & 14.72 & \textbf{10.12} & 33.7 & 2083.8

\\\hline
Vienna Cathedral&  789 &  1361659
& 758 & 6.58 & 3.12 & 14.52 & 8.28 & 10866.0
& 682 & \textbf{3.76} & \textbf{1.92} & \textbf{11.60} & \textbf{6.79} & 2845.6 & 13550.2
\\\hline
Yorkminster & 412 &  525592
& 407 & 4.25 & 2.71 & 6.45 & 3.68 & 2267.3
& 386 &\textbf{4.04} & \textbf{2.66} & \textbf{6.00} & \textbf{3.44} & 90.1 & 2371.1

\\\hline

\end{tabular}}
\caption{Performance on the Photo Tourism datasets: $n$ and $N$ are the number of cameras and key points, respectively ($n$  is listed three times as the initial number for the dataset and the remaining numbers after removing cameras in both pipelines); $\hat e_R$ $\tilde e_R$ indicate mean and median errors of absolute camera rotations in degrees, respectively; $\hat e_T$ $\tilde e_T$ indicate mean and median errors of absolute camera translations in meters, respectively; $T_{\text{FCC}}$ and $T_{\text{total}}$ are the runtime of FCC and the total runtime of the given pipeline (LUD or FCC+LUD), respectively (in seconds).} 
\label{tab:real}
\end{table*}

\subsection{Experiments on the Photo Tourism Database}
\label{subsec:exptourism}

We test FCC on the SfM Photo Tourism database \cite{photo_tourism} with precomputed pairwise image matches provided by \cite{SenguptaAGGJSB17}. These matches were obtained by thresholding SIFT feature similarities. We compare the LUD pipeline \cite{cvprOzyesilS15} to the LUD pipeline with FCC pre-processing (denoted as FCC+LUD). All components not involving FCC were implemented identically in both pipelines. 

For FCC+LUD, we first prune those matches with the FCC method using the parameters $T=2$, $\tau_t=0.1t$ and $\tau=0.5$. We remark that typically on such SfM data, we find that most of the values of the $\bS$ statistic are either larger than 0.9 or smaller than 0.1. Therefore the output is not sensitive to the choice of $\tau$, which is away from 0 and 1. 

After filtering keypoint matches for all pairs of images using FCC, we computed the essential matrices using the least median of squares procedure. We did not apply RANSAC since it is sensitive to the choice of threshold and sample size, which introduces more randomness in our evaluation. In contrast, the least median of squares is parameter-free and much faster. We also computed the essential matrices of the LUD pipeline in the same way. To account for cases where there were not enough keypoint matches, if there were less than 16 keypoint matches remaining after FCC filtering (as required by the median least squares), we remove the correspondence between the two images. At last, we fed essential matrices into the camera pose solver in the standard LUD pipeline \cite{cvprOzyesilS15}.  
We remark that the camera location solver in the LUD pipeline automatically examines the parallel rigidity of the viewing graph and extracts the maximal parallel rigid subgraph. This subgraph extraction procedure often removes more cameras if some camera correspondences are removed by FCC beforehand. However, our experiments show that the final numbers of cameras in both our procedure and pure LUD are comparable (with at least 80\% of cameras remaining in each dataset).

Table \ref{tab:real} reports the number of cameras, the accuracy (mean and median errors of rotations and translations) and  runtime of the standard LUD pipeline and the new FCC+LUD procedure. 
From the table, we observe that FCC+LUD improves the estimation of camera parameters over LUD on the unfiltered keypoint matches in a significant portion of the datasets. In particular, the only two sets where the LUD pipeline has better accuracy overall are
Gendarmenmarkt (where the error is very high anyway) and Roman Forum (where the differences are not too significant). This is mainly due to the highly symmetric buildings contained in the two datasets which results in self-consistent bad keypoint matches. These malicious matches cannot be removed by merely exploiting the cycle-consistency, and 3D geometric information should be used to solve this ambiguity. 

One may also compare the total runtime of LUD with that of FCC+LUD and its subcomponent, FCC. For most datasets the sum of the total time of LUD and of  FCC is about the same as the total time of FCC+LUD, although the one exception is the Piccadilly dataset. We noticed that in this case it takes much more time to extract the maximal parallel rigid component (it takes 4000 seconds for FCC+LUD, while only 72 seconds for LUD). For pure LUD, the parallel rigid component seems close to the original graph; however, this is not the case for FCC+LUD.

Additional figures appear in the supplemental material. One set of figures compares the estimation errors of FCC+LUD and LUD on their common set of cameras. They demonstrate that FCC+LUD generally performs better than LUD on this set of cameras, so the advantage of FCC+LUD was not just obtained by removing bad cameras. The other set of figures compares the performance of LUD on the cameras of FCC+LUD and the rest of the cameras. The errors on the first set of cameras are generally smaller than on the second set. Thus, FCC is helpful in removing bad cameras.
We finally remark that all other standard PPS algorithms are not scalable to the Photo Tourism datasets, and thus we only apply FCC here.

\section{Conclusion}
\label{sec:conclusion}
In this work, we develop novel robust statistics for multi-object matching. These statistics are based on cycle consistency constraints on the graph with all image keypoints as nodes and keypoint matches as edges. In particular, we combine within-cluster and cross-cluster constraints into a combined statistic to yield distinguished values between corrupted and uncorrupted matches. The resulting FCC method is efficiently implementable in practice due to only requiring sparse matrix multiplication and parallelization. Experiments in synthetic and real data demonstrate state-of-the-art accuracy and speed for structure from motion tasks. In particular, FCC is the only robust multi-object matching method that can scale to city-scale SfM data.

Due to the performance of our method combined with the intriguing heuristics, one direction for future work is to theoretically explore the properties of our statistics. Further, future work will explore the incorporation of SIFT descriptor similarities in our statistics, instead of using the adjacency matrix of the keypoint matching graph alone, in order to yield even more accurate 3D reconstructions. Finally, the use of the cross-cluster constraints allows us to develop a complementary statistic that has not been thought of before in the literature. In particular, this complementary statistic is based on considering paths containing edges in the disjoint graphs $G$ and $G_D$. It would be interesting to explore applications of such disjoint graph constraints in other settings. 

\subsection*{Acknowledgement}
This work was supported by NSF award DMS 1821266.

{\small
\bibliographystyle{ieee_fullname}
\bibliography{egbib}
}


\appendix
\newpage

\section{Supplementary Material}
Section \ref{subsec:expsynth} includes the synthetic experiment. Section \ref{sec:more_experiments} provides additional experiments on the Photo Tourism database. 
Section \ref{sec:specs} lists the specifications of machines that run the different experiments. 
Section \ref{sec:prop:D} proves Proposition \ref{prop:D}. Section \ref{sec:prop:D2} proves Proposition \ref{prop:D2}. Section \ref{sec:lemma:Scomp}
proves Lemma \ref{lemma:Scomp}.  
Section \ref{sec:erdos} shows that under a special model $\bX^2$ is sufficiently sparse and consequently the sample complexity bound in this case can be improved.
Finally, \S\ref{sec:simple_example} demonstrates by a simple example the meaning of the new statistics.

\subsection{More Experiments on the Synthetic Dataset}
\label{subsec:expsynth}

We provide more details and additional experiments for the novel synthetic dataset of \S\ref{subsec:conv}. Recall that this dataset models keypoint matches with $m$ 3D scene points on the unit sphere  $\mathbb{S}^2 \subset \mathbb{R}^3$, and $n$ cameras. We first uniformly and independently sample $m$ points from $\mathbb{S}^2$. 
Next, we generate random parameters for the  cameras. We independently sample their locations from 3D  Gaussian isotropic distribution with mean zero and covariance matrix $10\boldsymbol I$. We increase the distance from the origin of each location by 1, so that all cameras are located outside $\mathbb{S}^2$. In order to generate camera rotations, we assume that each camera points towards the origin and we rotate counterclockwise the image plane of each camera around the optical axis with an angle uniformly and independently sampled between 0 and $2\pi$. For simplicity, we set all image sizes as 1000 $\times$ 1000 and all focal lengths as 500. Synthetic keypoints are generated by projecting the 3D points onto the image plane. Next, we independently sample pairs of cameras with probability $p=0.5$ and find keypoint matches between them. Two keypoints are connected if and only if they correspond to the same 3D point. We remove the camera pairs that share less than 5 common visible 3D points (clearly, their keypoint matches are also removed). To generate the corrupted keypoint matches, we independently remove  with probability $q_0$  existing matches and add  with probability $q_1$ a false match to pairs of unmatched keypoints.

To compare the accuracy with different algorithms,  We use the above model with $m=n=100$ while varying the probabilities $q_0=q_1=q$ with $q$ ranging from $0.1$ to $0.9$.  we denote by $\hat E$ the estimate of $E_g$ (that is, estimated set of good matches) and use the common Jaccard distance \eqref{eq:metric_synthetic}, which is a decreasing function of Jaccard similarity and F-score.
We run both FCC without thresholding for 10 iterations and FCC with thresholding using the following parameters: $T=10$, $\tau_t=0.05t$ and $\tau=0.5$. 
We compare the accuracy with the following algorithms: MatchALS \cite{MatchALS}, Spectral \cite{deepti} and PPM \cite{PPM_vahan}.
For runtime, we later report a comparison with MatchLift  \cite{chen_partial}.
We use the default choice of MatchALS and MatchLift with a maximum of  1000 iterations using the code of \url{https://github.com/zju-3dv/multiway}.  Both Spectral and PPM were originally developed for permutation synchronization, but they can be directly extended to PPS, which fits the current setting. For Spectral, we first compute $\bV$ whose columns are top $m$ eigenvectors of $\bX$. We then divide the rows of $\bV$ into $n$ blocks of sizes $m_i\times m$, $i \in [n]$. We then apply the Hungarian algorithm to round each rectangular block of $\bV$ to obtain a block matrix $\bP$ of absolute partial permutations. The estimated $\bX_g$ is $\bP\bP^T\odot \bX$. We apply the same rounding procedure after each power iteration of PPM. Both Spectral and PPM require $m$ as input, whereas FFC does not need it and MatchALS and MatchLift estimate it.  We did not run MatchLift on the above setting since the size of the dataset is computationally prohibitive for it. We also do not compare with \cite{hu2018distributable}, as it is simply a distributed version of MatchALS. We did not compare with  \cite{serlin2020distributed} since it requires the SIFT input for each keypoint.
Figure \ref{fig:err} shows the results of this experiment for varying $q_0=q_1=q$. 
Generally, FCC is comparable to MatchALS, and it performs better than Spectral, MatchEig and PPM. 

\begin{figure}[h]
    \centering
    \includegraphics[width=\columnwidth]{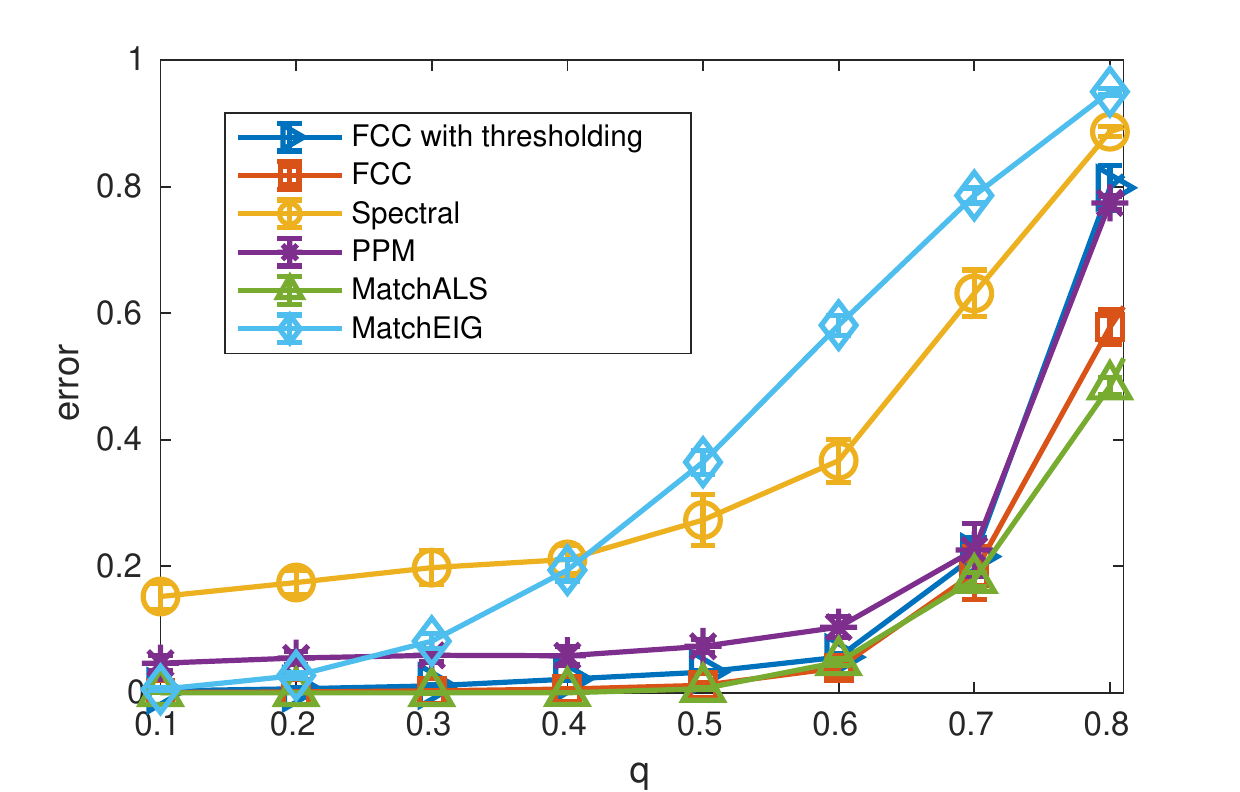}
    \caption{Matching error for different algorithms.}
    \label{fig:err}
\end{figure}

Tables \ref{tab:speed1} and \ref{tab:speed2} compare the runtime per iteration (in seconds) of the different algorithms with varying $n$ and $m$, respectively (the rest of the parameters are specified in their captions). Table \ref{tab:speed2} uses NA when an algorithm exceeds the memory limit.

\begin{table}[h]
\centering 
\resizebox{0.8\columnwidth}{!}{
\renewcommand{\arraystretch}{1}
\tabcolsep=0.1cm
\begin{tabular}{|c||c|c|c|c|c|c|c|c|c|}
\hline
 $n$& 50 & 100 &200 & 500 & 1000\\\hline
 FCC & 0.02 & 0.08 & 0.62 & 2.68 & 5.74\\\hline
 Spectral &0.41 & 2.22 & 11.08 & 62.07 & 254.57\\\hline
 MLift & 0.9 & 5.64 & 56.92 & 481.46 & 3678.1 \\\hline
 MALS & 0.15 & 0.47 & 2.01 & 11.86 & 46.89
\\\hline
\end{tabular}}
\caption{Runtime (in seconds) per iteration with varying $n$. The other parameters are $p=50/n$, $m=100$, $q_0 = 0.5$ and $q_1=0$.}\label{tab:speed1}
\end{table}

\begin{table}[h]
\centering 
\resizebox{0.8\columnwidth}{!}{
\renewcommand{\arraystretch}{1}
\tabcolsep=0.1cm
\begin{tabular}{|c||c|c|c|c|c|c|c|c|c|}
\hline
 $m$& 50 & 100 &500 & 1000 & 10000\\\hline
 FCC & 0.01 & 0.02 & 0.07 & 0.21 &1.71 \\\hline
 Spectral &5& 40.5 & 4976.4 &37234  &NA \\\hline
 MLift & 6.8 & 51.5 & 3729.5 & NA  & NA \\\hline
 MALS & 0.5 & 2.48 & 88.16 & NA & NA
\\\hline
\end{tabular}}
\caption{Runtime (in seconds) per iteration with varying $m$. We fix $n=100$, $p=0.1$, $q_0=0.5$ and  $q_1=0$.}\label{tab:speed2}
\end{table}

We note that FCC is much faster than all other methods in each iteration. In particular, it seems that the per-iteration runtime of FCC approximately scales linearly with $m$. 
We further remark that FCC typically requires only a few iterations and Spectral is not iterative (thus we reported its total time). In contrast, MatchALS and MatchLift require hundreds of iterations to converge.  
While all other algorithms are memory-demanding, FCC enjoys small space complexity and is able to run on datasets with $m \gg 1,000$. This makes FCC useful for common SfM datasets, where $m$ is often larger than $10,000$.
The SVD is the main computational bottleneck for both PPM and Spectral and we thus only report runtime for Spectral. We also skip \cite{hu2018distributable} as it is slower than Spectral and typically only achieves an order of magnitude speed-up compared to MatchALS (on a single machine).

\subsection{Additional Results on Photo Tourism}
\label{sec:more_experiments}

In order to get better information than the mere averages and medians, we consider the following two kinds of figures. First of all, we aim to compare the cameras remaining in FCC+LUD (and thus also in LUD). 
For this purpose, Figure \ref{fig:histR} and \ref{fig:histT} show the histograms of the difference of errors between LUD and FCC+LUD for this set of cameras for rotations and translations, respectively. 
They indicate positive values (where FCC+LUD improves over LUD) in blue and negative values (where LUD improves over FCC+LUD) in orange. Overall, in this set of cameras FCC+LUD often performs better.  
Second of all, we aim to understand whether the set of cameras that remained in LUD but not in FCC+LUD had higher errors and thus it might have been justified to remove them. For this purpose, 
Figures \ref{fig:histR_distribution} and  \ref{fig:histT_distribution} show the normalized histograms (having percentages instead of numbers of cameras) of LUD estimation errors for rotations and translations, respectively. In each figure two different kinds of distributions are generated. First, one generate the histogram for all cameras remaining in the LUD pipeline and normalize it (so it represents a distribution) and present it in blue; next, one generate the histogram of same errors for the additional cameras that were removed by the FCC+LUD procedure, normalize it and plots it in orange. 
Overall, the distribution of points removed by FCC+LUD seem to have higher values. For example, this property is noticed in the Tower of London and Yorkminster datasets, where on the other hand, the comparisons of Figures \ref{fig:histR} and \ref{fig:histT} do not seem advantageous for these datasets.

\begin{figure*}[htbp]
    \centering
    \includegraphics[width=2\columnwidth]{./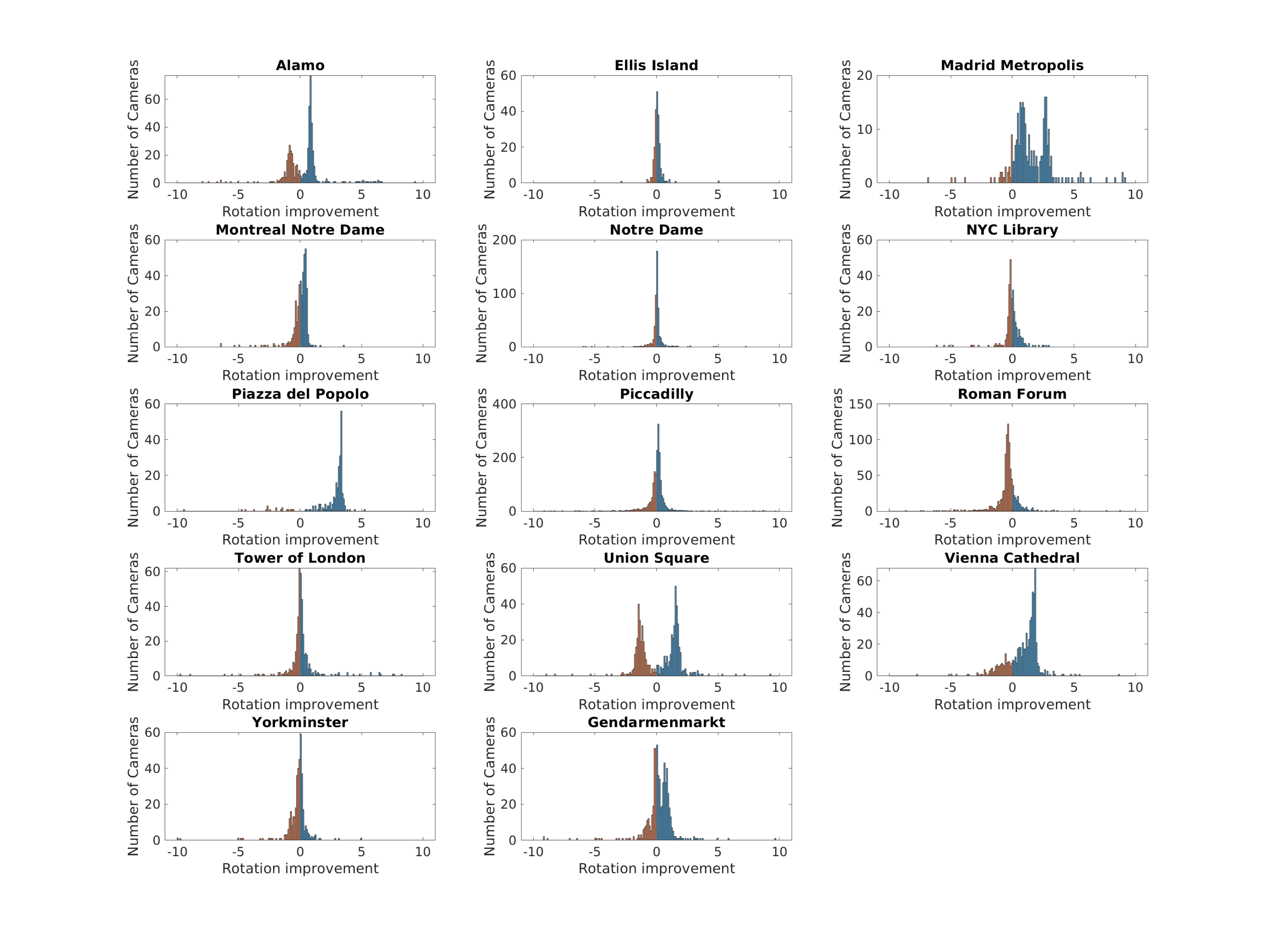}
    \caption{Histograms demonstrating general improvement of rotation errors after FCC filtering. Only the common set of cameras in the LUD and FCC+LUD pipelines were utilized. 
    The histograms are of the differences of rotation errors between LUD and FCC+LUD for this set of cameras. 
Positive values (where FCC+LUD improves over LUD) are in blue and negative values (where LUD improves over FCC+LUD) are in orange. 
}
    \label{fig:histR}
\end{figure*}

\begin{figure*}[htbp]
    \centering
    \includegraphics[width=2\columnwidth]{./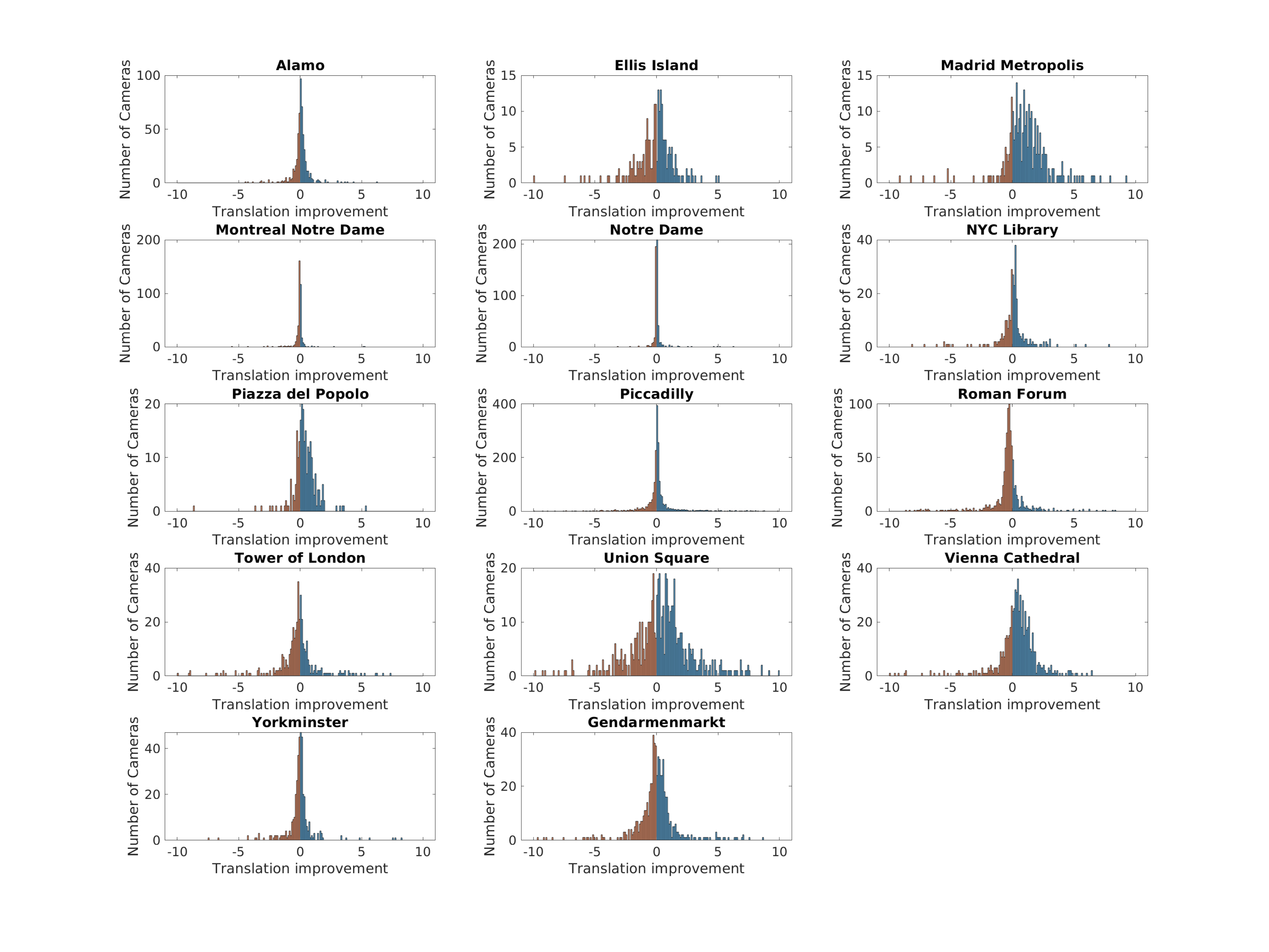}
\caption{Histograms demonstrating general improvement of translation errors after FCC filtering. Only the common set of cameras in the LUD and FCC+LUD pipelines were used. 
    The histograms are of the differences of translation errors between LUD and FCC+LUD for this set of cameras. 
Positive values (where FCC+LUD improves over LUD) are in blue and negative values (where LUD improves over FCC+LUD) are in orange. \label{fig:histT}}
\end{figure*}

\begin{figure*}[htbp]
    \centering
    \includegraphics[width=2\columnwidth]{./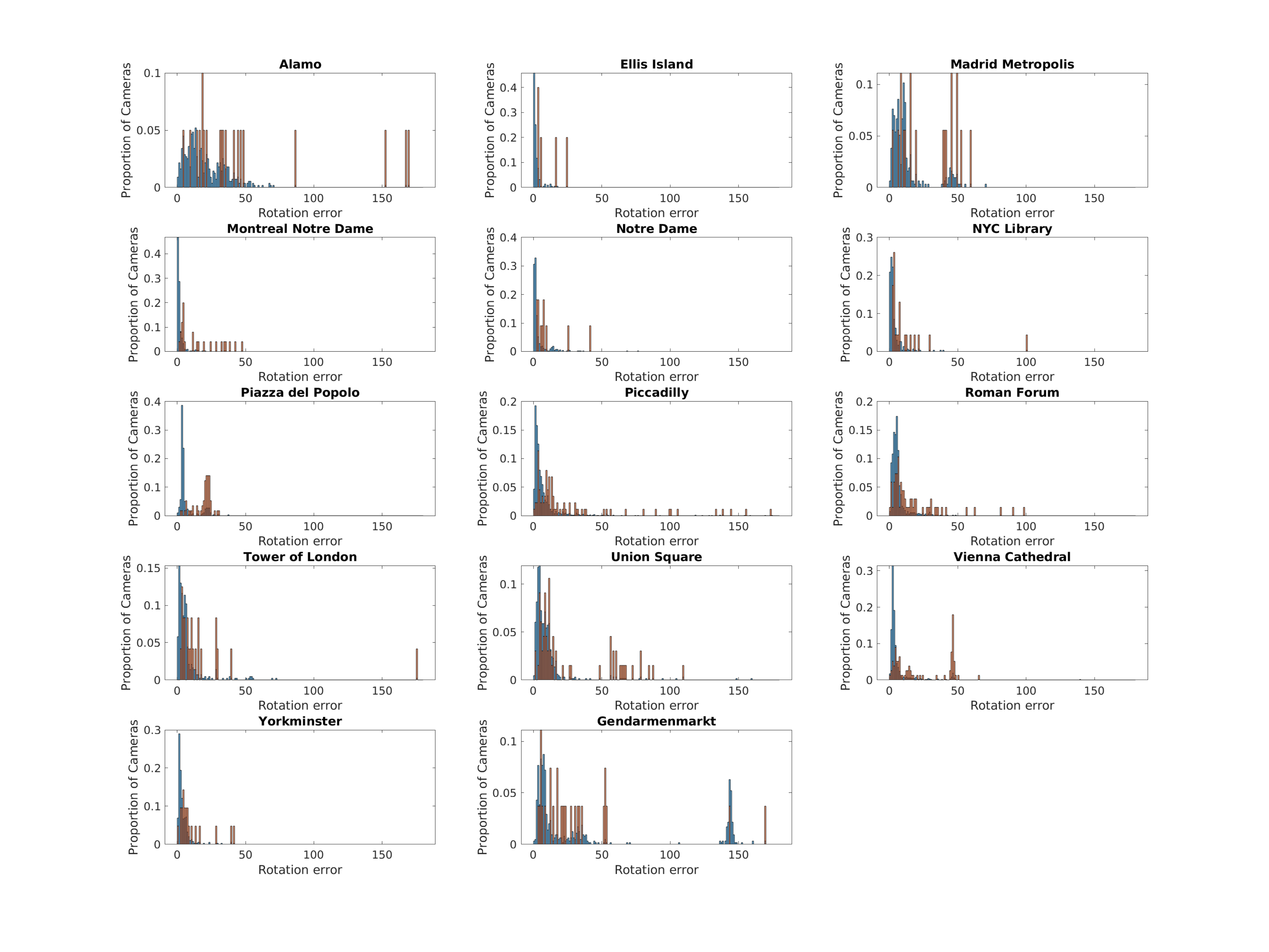}
    \caption{Demonstration of two different normalized histograms of LUD rotation errors. The blue normalized histogram consists of rotation errors of all cameras that remained in the LUD pipeline.  The orange normalized histogram consists of rotation errors of all of these cameras that were thrown away by FCC. The histograms are normalized so that the total area of each one is 1.}
    \label{fig:histR_distribution}
\end{figure*}

\begin{figure*}[htbp]
    \centering
    \includegraphics[width=2\columnwidth]{./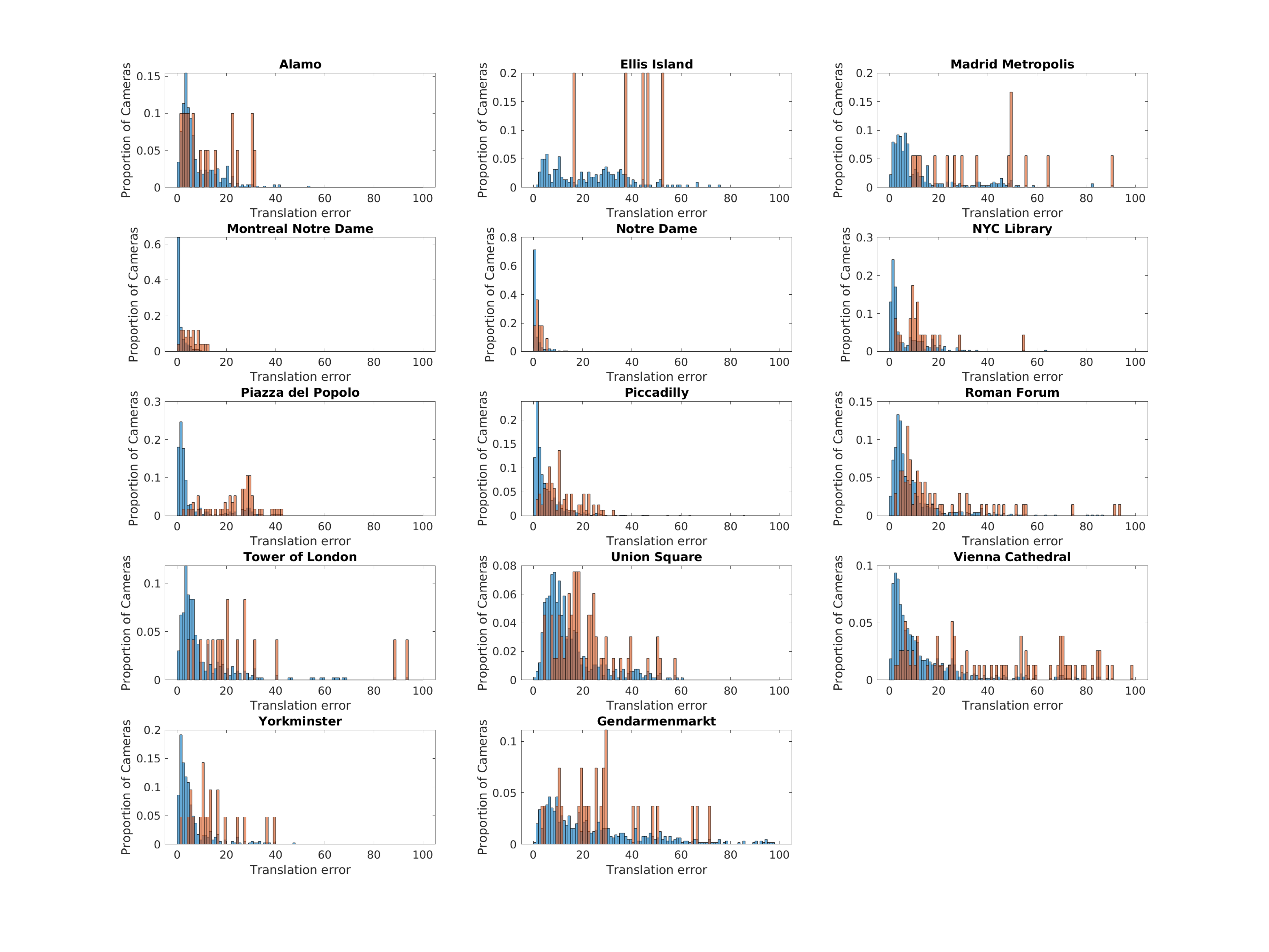}
    \caption{Demonstration of two different normalized histograms of LUD translation errors. The blue normalized histogram consists of translation errors of all cameras that remained in the LUD pipeline.  The orange normalized histogram consists of translation errors of all of these cameras that were thrown away by FCC. The histograms are normalized so that the total area of each one is 1.}
    \label{fig:histT_distribution}
\end{figure*}

\subsection{Specifications of Machines}\label{sec:specs}
\begin{itemize}
    \item The synthetic, Willow and Middlebury datasets are run on a computer with an 8-core 3.8 GHz Intel i7-10700K CPU and 48 GB RAM.  

    \item The EPFL datasets are run on a personal computer with quad-core Intel Core i5 CPU and 8 GB memory.

    \item The Photo Tourism datasets are implemented on a computer with 64 GB memory and an Intel Core i7-6850K CPU  with six 3.6GHz cores.
\end{itemize}

\subsection{Proof of Proposition \ref{prop:D}} \label{sec:prop:D}

For simplicity we only prove the case where  $r=s=1$, so that $\bS_2^{\text{sub}}=\bY\bD\bY$. The argument for other values of $r$ and $s$ is exactly the same.
It suffices to show that $\bS_2^{\text{sub}}(i,j)=0$ for any $ij\in \hat{E}^*$. Note that
\begin{align}
\nonumber
    \bS_2^{\text{sub}}(i,j) &= \sum_{k_1,k_2}\bY(i,k_1)\bD(k_1,k_2)\bY(k_2,j)\\
    &=\sum_{l\in [n]}\sum_{k_1\neq k_2 \in I_l}\bY(i,k_1)\bY(k_2,j).
    \label{eq:S1comp}
\end{align}
For $l \in [n]$, define 
$$\bS_{2,l}^{\text{sub}}(i,j):=\sum_{k_1\neq k_2 \in I_l}\bY(i,k_1)\bY(k_2,j).$$
We prove by contradiction that $\bS_{2,l}^{\text{sub}}(i,j)=0$ for each $l\in [n]$ and $ij\in \hat{E}^*$ and thus conclude that
$\bS_{2}^{\text{sub}}(i,j)=0$ for any $ij\in \hat{E}^*$. 
Assume on the contrary that there exists $l\in [n]$ such that
\begin{align}
    \bS_{2,l}^{\text{sub}}(i,j) = \sum_{k_1\neq k_2 \in I_l}\bY(i,k_1)\bY(k_2,j)>0
\end{align}
and without loss of generality assume that for some fixed $k_1\neq k_2\in I_l$, $\bY(i,k_1)\bY(k_2,j)>0$. This implies that $ik_1,jk_2\in E_{\text{sub}} \subseteq \hat{E}^*$. Also recall that $ij\in \hat{E}^*$. Therefore the edges $ik_1$, $jk_2$ and $ij$ are in the cycle-consistent graph $\hat{G}^*([N],\hat{E}^*)$. Consequently, the edge $k_1k_2$ is also in this cycle-consistent graph. However, since $k_1$ and $k_2$ are 
two keypoints in the same image, they are from two distinct connected components of $G_g$, and  thus belong to two connected components in $\hat{G}^*$. Therefore, $k_1k_2\notin \hat{E}^*$ and this results in a contradiction. 

\subsection{Proof of Proposition \ref{prop:D2}}
\label{sec:prop:D2}

By Proposition \ref{prop:D}, $\hat{\bX}^*(i,j)=1$ implies that $\bS_2^{\text{sub}}(i,j)=0$. Thus it suffices to show that $\hat{\bX}^*(i,j)=0$ implies that $\bS_2^{\text{sub}}(i,j)>0$.  Since $(i,j)$ is in the off-diagonal blocks of $\hat{\bX}^*$, node $i$ and $j$ are from two images. Since $\hat{\bX}^*(i,j)=0$, by definition of $\hat{\bX}^*$, node $i$ and $j$ are from two connected components of $\hat{G}^*$. By our assumption, there exists an image that contains the nodes $k,l$ respectively in the two components, where $\{k,l\}\neq \{i,j\}$. Therefore, there exists a path that goes through $i,k,l,j$ and thus $\bS_2^{\text{sub}}(i,j)>0$. 

\subsection{Proof of Lemma~\ref{lemma:Scomp}}
\label{sec:lemma:Scomp}

\begin{proof}
By applying $\bX^q = \bX^r \bX^s$ and respectively replacing the two $\bY$'s in ~\eqref{eq:S1comp} with $\bX^r$ and $\bX^s$, we obtain that
\begin{equation*}
\bS_2(i,j) = \sum_{l\in [n]}\sum_{k_1 \neq k_2 \in I_l}\bX^r(i,k_1)\bX^s(k_2,j).    
\end{equation*}
Similarly,
\begin{align*}
\nonumber
 \  \ \ \ \ \ \ \ \ \ \ \ \bS_1(i,j) &= \sum_{l\in [n]}\sum_{k \in I_l}\bX^r(i,k)\bX^s(k,j) \\
    &= \sum_{l\in [n]}\sum_{k_1 = k_2 \in I_l}\bX^r(i,k_1)\bX^s(k_2,j).  \  \ \ \ \ \ \
\qed
\end{align*}
\renewcommand{\qedsymbol}{}
\vspace{-\baselineskip} 
\end{proof}

\subsection{Sparsity of $\bX^2$ for an Erd\"os-R\'enyi Model}
\label{sec:erdos}

Our goal is to understand the sparsity of $\bX^2$ (and $\bX$) and consequently bound $n_2$ under a special probabilistic model, namely, the Erd\"os-R\'enyi random graph model. We first observe the following basic lemma.

\begin{lemma}
Assume an Erd\"os-R\'enyi random graph model on $N$ nodes with parameter $p$. Then, 
\begin{equation}
\label{eq:expectation_erdos}
    \mathbb E \bX^2(i,j) = (N-2) p^2.
\end{equation}
\end{lemma}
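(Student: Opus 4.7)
The claim is about off-diagonal entries (for $i \neq j$, which is the relevant case in counting length-$2$ paths between distinct keypoints), and the plan is to expand the matrix product entry-wise and use linearity of expectation together with the independence of edges in the Erd\"os-R\'enyi model.

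First I would write
\begin{equation*}
\bX^2(i,j) \;=\; \sum_{k=1}^{N} \bX(i,k)\,\bX(k,j).
\end{equation*}
Next I would observe that because the adjacency matrix $\bX$ of the keypoint graph is set to have zero diagonal (as emphasized in \S\ref{sec:problem_setup}), the terms corresponding to $k=i$ and $k=j$ vanish deterministically. This leaves exactly $N-2$ nontrivial terms, one for each $k \in [N]\setminus\{i,j\}$.

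Now I would invoke the defining property of the Erd\"os-R\'enyi model: whenever $k \notin \{i,j\}$, the edges $(i,k)$ and $(k,j)$ are distinct, so $\bX(i,k)$ and $\bX(k,j)$ are independent Bernoulli$(p)$ random variables. Thus $\mathbb{E}[\bX(i,k)\bX(k,j)] = p \cdot p = p^2$. Summing over the $N-2$ remaining values of $k$ and applying linearity of expectation yields
\begin{equation*}
\mathbb{E}\,\bX^2(i,j) \;=\; \sum_{k \neq i,j} \mathbb{E}[\bX(i,k)]\,\mathbb{E}[\bX(k,j)] \;=\; (N-2)\,p^2,
\end{equation*}
as claimed.

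There is no real obstacle here: the only subtlety is making sure to exclude $k=i$ and $k=j$ (so that the two edge indicators being multiplied are genuinely distinct, and hence independent). Since this is intended merely as a stepping stone toward bounding $n_2$ and justifying the sparsity of $\bX^2$ in the sample-complexity analysis, the proof is a short direct computation rather than anything requiring clever technique.
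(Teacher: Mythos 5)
Your proof is correct and is simply a fully written-out version of the paper's own one-line "simple counting argument": summing over the $N-2$ intermediate nodes $k\notin\{i,j\}$, each contributing $p^2$ by independence of the two distinct edges. The only implicit point in both versions is that the claim concerns $i\neq j$, which you state explicitly; no gap.
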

\begin{proof}
    A simple counting argument reveals that the expected number of paths of length 2 between nodes $i$ and $j$ is $p^2 \cdot(N-2)$.
\end{proof}

This lemma implies that the expected number of nonzero entries in $\bX^2$ is $ N^2(N-2) p^2$. Recall that $n_1$ and $n_2$ denote the average number of nonzero entries per column of $\bX$ and $\bX^2$. Thus, in a corresponding random graph, one could set $p = n_1 / N$, and obtain that  $\E \nnz(\bX^2) \approx n_1^2 (N-2)$ and thus $n_2 \approx (N-2)n_1^2/N \approx n_1^2$ (for simplicity we avoid the estimate of concentration around the expected value in \eqref{eq:expectation_erdos}). Therefore, at least heuristically, we could expect sparse powers of $\bX$ given sufficient sparsity of $\bX$ itself coupled with well-distributed nonzero entries.

Using the estimates of  \S\ref{sec:complexity} and the upper following bound: $\nnz(\bX) \leq N n_1$, the computational complexity of the algorithm is $O(N n_1 n_1^2 (N-2)/N) = O(N n_1^3)$.

\subsection{A Simple Demonstrating Example}
\label{sec:simple_example}
The following example heuristically illustrates the usefulness of the $\bS$ statistic in emphasizing the good edges, while rigorous theorems are left for future work.

Consider the example of 4 images that each contain the same two key points. Suppose that the perceived point matches between the images correspond to the following good (indicated by black ones) and bad matches (indicated by red ones):

\begin{equation*}
  \bX =  
\left[
\begin{array}{cc|cc|cc|cc}
0 & 0 & 0 & {\color{red} 1} & 1 & 0 & 1 & 0 \\
0 & 0 & 0 & 0 & 0 & 1 & 0 & 1\\ \hline
0 & 0 & 0 & 0 & 1 & 0 & 1 & 0\\
{\color{red} 1} & 0 & 0 & 0 & 0 & 1 & 0 & 1\\ \hline
1 & 0 & 1 & 0 & 0 & 0 & 1 & 0\\
0 & 1 & 0 & 1 & 0 & 0 & 0 & 1\\ \hline
1 & 0 & 1 & 0 & 1 & 0 & 0 & 0\\
0 & 1 & 0 & 1 & 0 & 1 & 0 & 0
\end{array}
\right].
\end{equation*}

\vspace{.3cm}
The top part of Figure \ref{fig:ex} demonstrates the corresponding keypoint graph. The $n=4$ dashed boxes indicate the images. There are $N=8$ nodes clustered to $m=2$ groups of two different scene points, colored by blue and orange. The edges in $E_g$ are in black and the single edge in $E_b$ is in red. The dashed blue edges are not in $E$, but in $E_D$. The keypoints of all pairs of images, except for the first two images, are correctly matched (that is, blue/orange nodes are matched by a black edge in $E_g$); however, keypoints in images 1 and 2 are mismatched by a single edge between an orange and blue node. The bottom of Figure \ref{fig:ex} describes an equivalent representation of this graph without specifying the underlying images.

For simplicity, assume the case of $q=2$ and $r=s=1$. Examining $\bS_1=\bX^2$ or observing length 2 paths in $G$ (i.e., of black and red edges) in the top part of Figure \ref{fig:ex} reveals that there are no such paths that connect the mismatched nodes, 1 and 4 (or alternatively, complete the bad red edge into a 3-cycle). On the other hand, the nodes of the good edges are all connected by either one or two paths of length 2 in $G$ (good edges with 2 such paths are $57$ and $68$). Examining $\bS_2=\bX \bD \bX$ or paths composed of black, dashed blue and then black edges reveals that the two nodes of the bad match have two paths each connecting them ($\{17,78,84\}$ and $\{15,56,64\}$), while the nodes of good matches (edges in $E_g$) have at most one such path (the edges with exactly one path are $17$, $15$, $46$, $48$). Using the information above, we can directly compute values of $\bS$. For the bad edge $14$, $\bS(1,4)=\bS(4,1)=0/(0+2)=0$. On the other hand, for $ij \in E_g$, a rough lower bound for $\bS(i,j)$  is $1/(1+2)=1/3$ and a more careful observation of all values indicates that there only two values of this statistic: $1/(1+1)=1/2$ (for edges $17$, $15$, $46$, $48$) and $2/(2+0)=1/(1+0)=1$. In this case, the matrix $\bS$ completely removes the bad edge and emphasizes the rest of edges, whereas the edges that connect to the bad edges have a lower value than the rest of edges (though after thresholding with our default choices they will obtain the value 1).

One can easily extend this case to higher values of $q$, $r$ and $s$, and in particular, $q=4$ and $r=s=2$. 
In this example, for any chosen $q$,  $\bS_1=\bX^q$ and thus $\bS$ assigns value 0 to the bad edge $14$. On the other hand, the values of $\bS$ to the other edges are sufficiently large. One can perform a similar analysis when $23$ is an additional bad edge and still notice that low and non-zero values of the statistics $\bS$ help in separating bad edges.

\begin{figure}[htbp]
    \centering
    \includegraphics[width=.95\columnwidth]{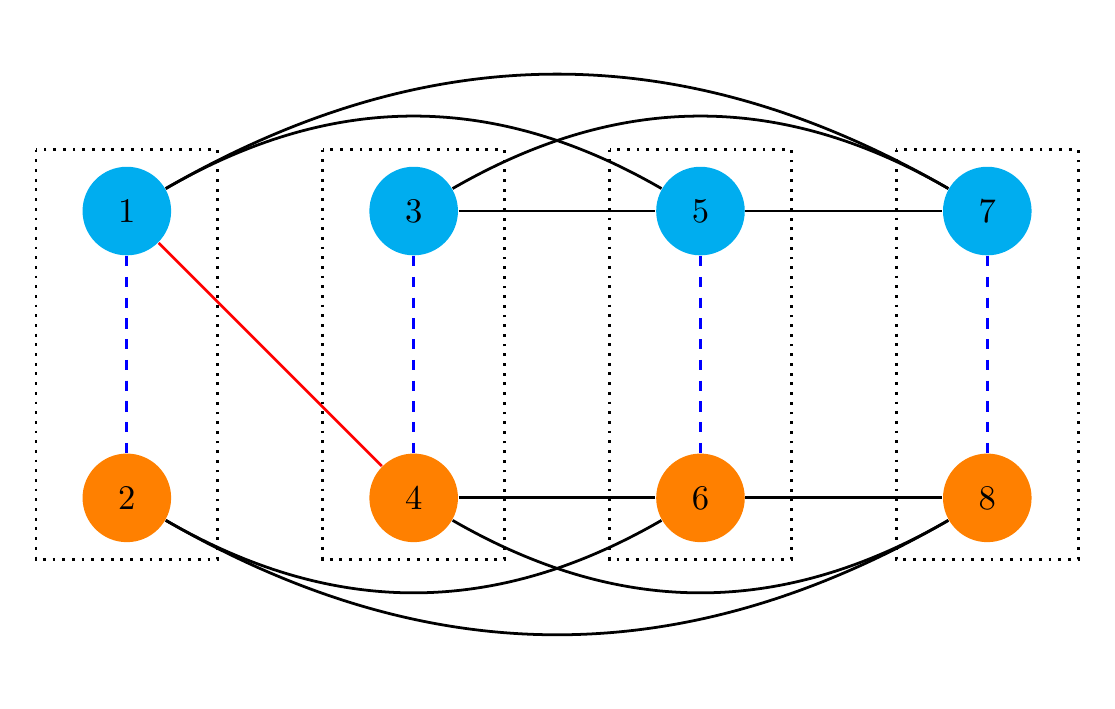}
    \includegraphics[width=.85\columnwidth]{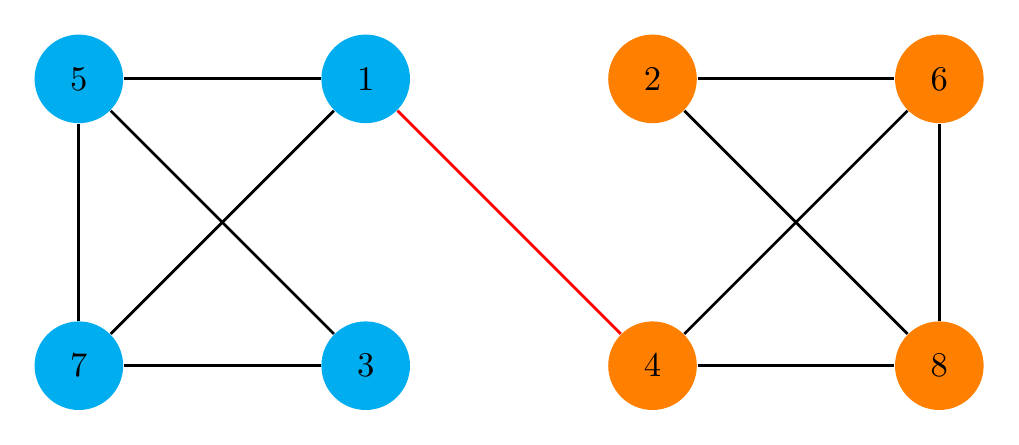}
    \caption{A specific example of keypoint matches. Top:  Boxes represent images and each node is a keypoint (clustered according to two different 3D scene points). Edges in $E_g$, $E_b$ and $E_D$ are in black, red and dashed blue, respectively. The $\bS$ statistics, computed in the text, is zero for the bad edge in $E_b$ and sufficiently large otherwise. Bottom: the view of this graph as two connected subgraphs with one bad edge in between.}
    \label{fig:ex}
\end{figure}

\end{document}